\newcounter{myverb}
\newenvironment{customlist}{
  \begin{list}{\textbullet}{
    \setlength{\leftmargin}{1em}
    \setlength{\itemindent}{0em}
    \setlength{\itemsep}{-0.1em}
  }
}{
  \end{list}
}
\newtheorem*{rep@theorem}{\rep@title}
\newcommand{\newreptheorem}[2]{%
\newenvironment{rep#1}[1]{%
 \def\rep@title{#2 \ref{##1}}%
 \begin{rep@theorem}}%
 {\end{rep@theorem}}}
\newtheorem{proposition}{Proposition}
\theoremstyle{remark}
\newtheorem*{remark}{Remark}
\renewcommand{\d}[1]{\ensuremath{\operatorname{d}\!{#1}}}
\NewDocumentCommand{\VarField}{m}{\texttt{\{#1\}}}
\NewDocumentCommand{\Magenta}{m}{\textcolor{magenta}{#1}}
\newcommand{\nocomment}{}
\NewDocumentCommand{\ec}
{ mO{} }{\textcolor{cyan}{\textsuperscript{\textit{Eunsol}}\textsf{\textbf{\small[#1]}}}}
\NewDocumentCommand{\mz}
{ mO{} }{\textcolor{blue}{\textsuperscript{\textit{Michael}}\textsf{\textbf{\small[#1]}}}}
\NewDocumentCommand{\vw}
{ mO{} }{\textcolor{gray}{\textsuperscript{\textit{Victor}}\textsf{\textbf{\small[#1]}}}}
    \NewDocumentCommand{\ec}
    { mO{} }{\textcolor{cyan}{}}
      \NewDocumentCommand{\mz}
    { mO{} }{\textcolor{blue}{}}
      \NewDocumentCommand{\vw}
    { mO{} }{\textcolor{green}{}}
\title{Improving LLM-as-a-Judge Inference with the Judgment Distribution}
\author{
 \textbf{Victor Wang\textsuperscript{1}},
 \textbf{Michael J.Q. Zhang\textsuperscript{2}},
 \textbf{Eunsol Choi\textsuperscript{2}}
\\
Department of Computer Science \\
 \textsuperscript{1}The University of Texas at Austin,
 \textsuperscript{2}New York University
\\
 \texttt{victorwang37@utexas.edu}
}
\begin{document}
\maketitle
\begin{abstract}
Using language models to scalably approximate human preferences on text quality (LLM-as-a-judge) has become a standard practice applicable to many tasks.
A judgment is often extracted from the judge's textual output alone, typically with greedy decoding. However, LLM judges naturally provide \textit{distributions} over judgment tokens, inviting a breadth of inference methods for extracting fine-grained preferences.
We find that taking the mean of the judgment distribution consistently outperforms taking the mode (i.e. greedy decoding) in all evaluation settings (i.e. pointwise, pairwise, and listwise).
We further explore novel methods of deriving preferences from judgment distributions, and find that methods incorporating risk aversion often improve performance. Lastly, we analyze LLM-as-a-judge paired with chain-of-thought (CoT) prompting, showing that CoT can collapse the spread of the judgment distribution, often harming performance. Our findings show that leveraging distributional output improves LLM-as-a-judge, as opposed to using the text interface alone.
\end{abstract}

\section{Introduction}

\begin{figure*}
    \centering
    \includegraphics[width=\linewidth]{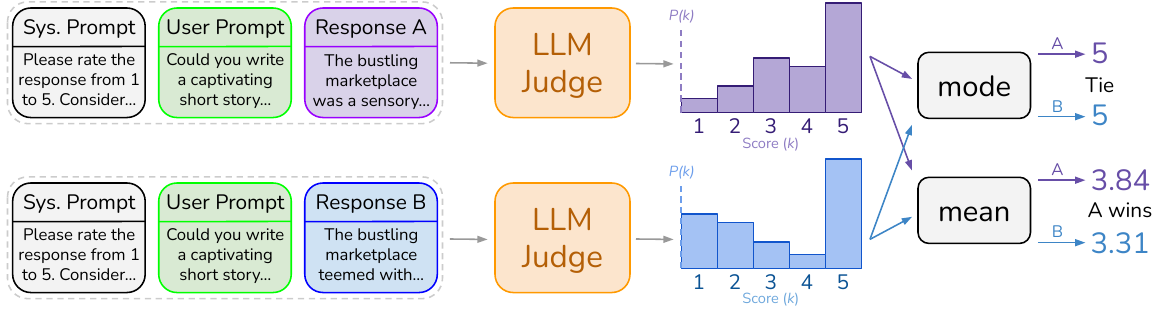}
    \caption{Pointwise LLM judge's logits produce a score distribution. We show two ways to compare two score distributions: (1) comparing the modes of the distributions and (2) comparing the means of the distributions.
    }
    \label{fig:pointwise}
\end{figure*}

LLM-as-a-judge has emerged as a scalable framework for evaluating model outputs by approximating human annotation \cite{lin2024wildbench, li2024crowdsourced, dubois2024length}.
Typically, such systems prompt off-the-shelf LLMs to score a response or rank multiple responses to a given user prompt.
LLM-as-a-judge methods boast strong agreement with human judgments across a breadth of domains and criteria \cite{zheng2023judging, ye2023flask}, despite current limitations \cite{koo2023benchmarking, tan2024judgebench}.



Most prior work involving LLM-as-a-judge elicits judgments through the LLM's text interface \cite{lin2024wildbench, zhu2023judgelm, ye2023flask}, where the most likely token (i.e. the mode of the next token distribution) or a sampled token is taken to represent the LLM's judgment. Recent works~\cite{leerlaif, liu2023g, Yasunaga2024ALMAAW} have suggested that taking the mean of the score token distribution can better represent the LLM's judgment.
In this work, we comprehensively evaluate design choices for leveraging LLM judges' distributional output.\footnote{We provide implementations of the evaluated methods at
\url{https://github.com/dubai03nsr/distributional-judge}.}

We show that the mean consistently outperforms the mode in the pointwise, pairwise, and listwise settings (i.e. evaluating one, two, and many responses at a time). Specifically, the mean achieves higher accuracy in 42 out of 48 cases on RewardBench \cite{lambert2024rewardbench} and MT-Bench \cite{zheng2023judging}.
We further explore novel methods of deriving preferences from score distributions (Section \ref{sec:study-score-distribution}).
For example, incorporating risk aversion often improves performance.
Categorizing methods as discrete or continuous, where discrete methods (e.g. mode) are simple to interpret like rubric scores,
we find that continuous methods outperform discrete methods, due to the latter often predicting ties and failing to capture slight preferences.
In particular, the mode assigns ties more frequently than every other method, leading to the lowest accuracy even among discrete methods.

We further study how chain-of-thought (CoT) prompting \cite{Wei2022ChainOT} impacts the performance of LLM-as-a-judge. After the CoT reasoning, LLMs often exhibit sharper score distributions, making the mean judgment similar to the mode. Removing CoT increases the spread of the judgment distribution, often improving performance, and more so for taking the mean than taking the mode (e.g. absolute +6.5\% for mean vs. +1.4\% for mode, on average with pointwise scoring on RewardBench), demonstrating the synergy between eliciting and using distributional output.


Our findings stress the importance of leveraging distributional output to maximize the effectiveness of LLM-as-a-judge, as opposed to using the text interface alone. As LLM-as-a-judge paradigms are widely adopted for complex tasks, improving best practices for using LLM-as-a-judge can impact many end tasks' development and evaluation.

\section{Background}

\subsection{LLM-as-a-Judge Settings}\label{sec:settings}

We briefly review three settings for LLM-as-a-judge; see Appendix \ref{app:rw-settings} for more background.

\paragraph{Pointwise Scoring} The LLM judge scores the two texts independently on a scale from 1 to some $K$, as shown in Figure \ref{fig:pointwise} \cite{zheng2023judging, lin2024wildbench, Cui2023UltraFeedbackBL}.

\paragraph{Pairwise Scoring} The LLM judge scores both texts in a single prompt \cite{zhu2023judgelm, Saha2023BranchSolveMergeIL, Chan2023ChatEvalTB}. To account for position bias, we prompt the LLM judge twice, once for each order of presentation, and average the outputs \cite{leerlaif}.

\paragraph{Pairwise Ranking} The LLM judge states which of the two texts it prefers \cite{lin2024wildbench, li2024crowdsourced, dubois2024length}. As with pairwise scoring, we prompt the LLM judge twice, once for each order of presentation.


\subsection{Related Work}
\paragraph{Mean Judgment}
Several prior works have used the mean of the judgment distribution, mostly in the pointwise setting. \citet{liu2023g, leerlaif, SaadFalcon2024LMUnitFE} note the benefits of the mean but do not empirically compare it with the mode.
\citet{zawistowski2024unused}, \citet{Hashemi2024LLMRubricAM}, \citet{Lukasik2024RegressionAI} show that the mean outperforms the mode for summary scoring, dialogue scoring, and other regression tasks.
Concurrent work~\citep{Yasunaga2024ALMAAW} shows that the mean outperforms the mode on RewardBench \cite{lambert2024rewardbench}, but the paper's focus is on data-efficient alignment.

\citet{leerlaif, Zhai2024OnlineSL} use pairwise judgment distributions to train a student model, but do not empirically compare with distillation using one-hot judgments. In this work, we benchmark the mode, the mean, and newly proposed methods for leveraging distributional judgments across the pointwise, pairwise, and listwise settings.

\paragraph{CoT}
\citet{zheng2023judging} presented preliminary evidence that CoT benefits LLM-as-a-judge. Other LLM-as-a-judge systems have been proposed that take advantage of LLMs' ability to perform CoT reasoning \cite{ankner2024critique, Feng2024NaturalLR}. On the other hand, \citet{Liu2024ReIFERI} evaluate many evaluation protocols and find that CoT can hurt performance. However, their analysis assumes access only to the judges' text interface, not examining the effect of CoT on the judgment distribution. In this work, we analyze the interplay between CoT and the inference method (e.g. mode vs. mean).

Related phenomena on the effect of CoT have been studied in the literature \cite{Chiang2023ACL, Stureborg2024LargeLM, Liu2024MindYS, Lee2023ApplyingLL, Sprague2024ToCO, hao2024traininglargelanguagemodels, zheng2023judging}. \citet{Wang2024ChainofThoughtRW} show the sharpening effect of CoT, which improves performance on numerical reasoning tasks. In this work, we show that this sharpening effect can be harmful when the LLM is used as a judge. 



\paragraph{Distributional Reward Models}
Using distributional judgment makes it possible for LLM judges to represent pluralistically aligned preferences \cite{sorensenposition, siththaranjan2023distributional, Kumar2024ComPOCP}. Compared to existing work on distributional reward models \cite{siththaranjan2023distributional, zhang2024diverging, Li2024AligningCF, Dorka2024QuantileRF, Poddar2024PersonalizingRL, Padmakumar2024BeyondTB}, (1) our setting involves LLMs not trained or prompted for distributional judgment~\cite{Meister2024BenchmarkingDA}, and (2) LLM judges can produce arbitrary distributions over a flexibly chosen discrete judgment space.

\section{Distributional Judgment}
\label{sec:pointwise-pairwise}
In this section, we present our findings comparing mode vs. mean inference and CoT vs. no-CoT prompting for LLM-as-a-judge systems.



\subsection{Methods}


To infer a judgment from the LLM's output distribution, we use the mode or the mean. With \textbf{mode}, we perform greedy decoding to produce a judgment token and discard the logits. With \textbf{mean}, we compute a weighted average of the judgment options, weighting each judgment option by the probability assigned to its token. See Appendix \ref{app:methods} for details.


\subsection{Experimental Setup}\label{sec:pointwise-pairwise-expmt}

\paragraph{Models} As LLM judges, we use gpt-4o-2024-08-06 (shortened to GPT-4o) \cite{openai2024gpt4ocard}, 
Llama-3.1-8B-Instruct (Llama-3.1-8B) \cite{dubey2024llama3herdmodels}, 
Mistral-7B-Instruct-v0.3 (Mistral-7B) \cite{jiang2023mistral7b}, and Prometheus-2-7B \cite{kim2024prometheus}. We cover a commonly used closed-source LLM\footnote{Some proprietary LLMs such as Anthropic Claude do not provide logit access, preventing us from including them in our experiments. In Appendix \ref{sec:deepseek-results}, we provide partial results for DeepSeek-V3, whose trends match those of GPT-4o.} (GPT-4o), as well as smaller open-source variants. 

\paragraph{Inference Settings}
We prompt the LLM judge with or without \textbf{CoT} reasoning, i.e. to provide a brief explanation before stating the judgment.
We use greedy decoding for CoT prompting.
See Appendix \ref{app:prompts} for prompts.

We softmax the judgment logits into judgment probabilities with temperature 1.
We use the score space $\{1, \ldots, K = 9\}$ in this section.

\begin{table}[t]
\small
\centering
\begin{adjustbox}{center=\columnwidth}
\begin{tabular}{ccccc}
\toprule
\multirow{2}{*}{Model} & \multirow{2}{*}{Setting} & \multirow{2}{*}{Method} & Reward & \multirow{2}{*}{MT-Bench} \\
&&& Bench &\\
\midrule
\multirow{6}{*}{\rotatebox{90}{\centering GPT-4o}} & \multirow{2}{*}{point score} &  mode & 85.1, 84.0 & 81.9, 80.5 \\
 &  &  mean & \underline{87.4}, \textbf{88.0} & \textbf{83.6}, \underline{83.2} \\ \cline{2-5}
 & \multirow{2}{*}{pair score} & mode & 86.7, \underline{87.4} & \underline{86.2}, \underline{86.5} \\
 &  & mean & \underline{87.1}, \textbf{87.6} & \underline{86.3}, \textbf{86.8} \\ \cline{2-5}
 & \multirow{2}{*}{pair rank} & mode & 88.4, 89.7 & 86.3, 85.6 \\
 &  & mean & 88.6, \textbf{90.5} & \textbf{87.3}, 85.9 \\ \hline
\multirow{6}{*}{\rotatebox{90}{\centering \shortstack{Llama\\-3.1-8B}}} & \multirow{2}{*}{point score} &  mode & 69.6, 72.2 & 74.9, 71.9 \\
 &  &  mean & 72.7, \textbf{79.3} & 78.7, \textbf{81.5} \\ \cline{2-5}
 & \multirow{2}{*}{pair score} & mode & 71.7, 75.2 & \textbf{82.6}, \underline{82.4} \\
 &  & mean & 72.1, \textbf{76.8} & \underline{82.3}, 81.2 \\ \cline{2-5}
 & \multirow{2}{*}{pair rank} & mode & 68.9, 58.9 & 76.2, 63.0 \\
 &  & mean & \textbf{74.2}, 68.6 & \textbf{80.0}, 76.5 \\ \hline
\multirow{6}{*}{\rotatebox{90}{Mistral-7B}} & \multirow{2}{*}{point score} &  mode & 60.4, 62.7 & 59.5, 66.2 \\
 &  &  mean & 63.8, \textbf{72.1} & 62.6, \textbf{74.0} \\ \cline{2-5}
 & \multirow{2}{*}{pair score} & mode & 67.3, 68.9 & \underline{79.3}, \underline{79.8} \\
 &  & mean & 68.1, \textbf{71.0} & \underline{80.0}, \textbf{80.4} \\ \cline{2-5}
 & \multirow{2}{*}{pair rank} & mode & 56.3, 53.8 & 51.5, 51.5 \\
 &  & mean & \textbf{63.9}, 59.1 & \textbf{73.5}, 65.5 \\ \hline
\multirow{6}{*}{\rotatebox{90}{\shortstack{Prometheus\\-2-7B}}} & \multirow{2}{*}{point score} &  mode & 64.3, 66.0 & 72.5, 73.5 \\
 &  &  mean & 64.6, \textbf{75.2} & 72.1, \textbf{81.6} \\ \cline{2-5}
 & \multirow{2}{*}{pair score} & mode & \textbf{71.0}, 68.7 & 78.4, \underline{80.8} \\
 &  & mean & 70.5, \underline{70.8} & 78.3, \textbf{80.9} \\ \cline{2-5}
 & \multirow{2}{*}{pair rank} & mode & 59.6, 48.2 & 51.5, 43.0 \\
 &  & mean & \textbf{69.7}, 48.8 & \textbf{75.4}, 33.4 \\
\bottomrule
\end{tabular}
\end{adjustbox}
\caption{Mode vs. mean and CoT vs. no-CoT (comma-separated) accuracy results (\%).
For each base model+setting, we bold the best result and underline results not significantly worse ($\alpha = 0.05$).
The mean outperforms the mode in 42 out of 48 cases. No-CoT outperforms CoT in 14 out of 16 cases when using the mean for pointwise or pairwise scoring.
}
\label{tab:pointwise-pairwise-short-main-results}
\end{table}

\paragraph{Evaluation Datasets and Metrics} We evaluate on RewardBench \cite{lambert2024rewardbench} and MT-Bench \cite{zheng2023judging}, two canonical datasets for preference modeling with human annotations. Each data instance contains a prompt, a preferred response, and a dispreferred response. 

We evaluate accuracy on the binary classification task; predicting the correct winner, a tie, or the wrong winner gets 1, 0.5, or 0 points, respectively \cite{lambert2024rewardbench}. RewardBench contains 2,985 (prompt, response 1, response 2) triplets, each labeled with the preferred response. Since MT-Bench has multiple human judgments per triplet, we compute accuracy using only triplets with unanimous human judgments (1,132 out of 1,814). See Appendix \ref{app:datasets} for dataset details.



\subsection{Results}\label{sec:pointwise-pairwise-main-results}

Table \ref{tab:pointwise-pairwise-short-main-results} shows our main results, comparing mode vs. mean and CoT vs. no-CoT across various prompt settings and LLMs.

\paragraph{Mean outperforms mode}

The mean outperforms the mode in 42 out of 48 cases.
In Table \ref{tab:pointwise-pairwise-main-results}, we provide a subset breakdown of RewardBench and observe particularly large gains for pointwise scoring on the Reasoning subset.

\begin{table}[t]
\small
\centering
\begin{tabular}{cccc}
\toprule
Model & Setting & RewardBench & MT-Bench \\
\midrule
\multirow{3}{*}{GPT-4o} & point score & .039, .103 & .041, .116 \\
 & pair score & .042, .066 & .038, .064 \\
 & pair rank & .002, .065 & .012, .114 \\ \midrule
\multirow{3}{*}{\shortstack{Llama\\-3.1-8B}} & point score & .060, .101 & .068, .093 \\
 & pair score & .054, .106 & .047, .092 \\
 & pair rank & .215, .318 & .186, .331 \\
\bottomrule
\end{tabular}
\caption{Average standard deviation of judgment distribution, with judgment options rescaled to $[0, 1]$. Comma-separated values in each cell are with and without CoT. No-CoT always has a greater standard deviation.
}
\label{tab:std}
\end{table}

\paragraph{CoT often harms LLM-as-a-judge}
For the scoring settings, no-CoT outperforms CoT in 14 out of 16 cases when using the mean. For the pairwise ranking setting, CoT outperforms no-CoT, except with GPT-4o on RewardBench.

We interpret the harmful effect of CoT on pointwise scoring with the smaller models as being due to \textit{sharpening}, whereby the initial entropy in the judgment is lost as the model commits to one instantiation of a reasoning trace.
Table \ref{tab:std} confirms this trend by showing that the standard deviation of judgment distributions is lower for CoT than no-CoT.
Moreover, removing CoT benefits the mean more than the mode (e.g. 69.6\textrightarrow72.2 for mode vs. 72.7\textrightarrow79.3 for mean, with Llama-3.1-8B on RewardBench), revealing the synergy between eliciting and utilizing distributional judgment.

\paragraph{Which setting works the best?}
Comparing different LLMs, we find GPT-4o performs better with pairwise judgment (e.g. 88.0 for pointwise scoring vs. 90.5 for pairwise ranking on RewardBench) as in prior work, but the smaller models often do better with pointwise judgment and rely heavily on CoT for pairwise ranking (e.g. with Prometheus-2-7B on MT-Bench, 75.4\textrightarrow33.4 when removing CoT from pairwise ranking, compared to 81.6 with no-CoT pointwise scoring).
We believe this is because pairwise judgment demands a more powerful judge to leverage the context. Thus, in pairwise ranking with the smaller models, the reasoning gained by CoT often outweighs the distributional signal lost in the process. Nonetheless, using pairwise scoring (where assigning individual scores can be viewed as an intermediate reasoning step) rather than pairwise ranking can eliminate the need for CoT, and we recover much of the gap on RewardBench, and match or exceed pointwise performance on MT-Bench.

\begin{table*}[t]
\small
\centering
\begin{tabular}{cllc}
\toprule
\multirow{2}{*}{Name} & \multirow{2}{*}{Description} & Definition of $\textsc{Name}(X_1, X_2) \in [-1, 1]$ & Discrete or \\
 & & (higher says $X_1$ is better, lower says $X_2$ is better) & Continuous\\\midrule
{\sc mode} & Mode & ${\rm sgn}(r_1 - r_2)$ with $r_i = \arg\max_k P(X_i = k)$ & Discrete\\
{\sc mean} & Mean & \hspace*{-2mm}\begin{tabular}{l}\(\displaystyle \frac{\mathbb{E}(X_1-X_2)}{\mathbb{E}|X_1-X_2| + \sigma(X_1 - X_2)} \)\end{tabular}\hspace*{2mm} & Continuous\\
{\sc[mean]} & Rounded mean & ${\rm sgn}(r_1 - r_2)$ with $r_i = \arg\min_k |\mathbb{E}X_i - k|$ & Discrete\\
{\sc medi} & Median & ${\rm sgn}(r_1 - r_2)$ with $r_i = Q_{X_i}(0.5)$ & Discrete\\
{\sc 1p} & 1st percentile & ${\rm sgn}(r_1 - r_2)$ with $r_i = Q_{X_i}(0.01)$ & Discrete\\
{\sc ram} & Risk-averse mean & $\textsc{mean}(X_1 - \sigma_-(X_1), X_2 - \sigma_-(X_2))$ & Continuous\\
{\sc qt} & Quantiles & $ \int_0^1 {\rm sgn}(Q_{X_1}(p) - Q_{X_2}(p)) \d{p}$ & Continuous\\
{\sc ps} & Probability of superiority & $P(X_1 > X_2) - P(X_1 < X_2)$ & Continuous\\
\bottomrule
\end{tabular}
\caption[]{Methods of comparing two score distributions $X_1, X_2$ over $K$ score options. $\rm sgn$ is the sign function. $Q_X(p)$ denotes the value at the $p$-quantile. $\sigma(X)$ denotes the standard deviation; $\sigma_-(X) = \sqrt{\mathbb{E}[\max(\mathbb{E}X - X, 0)^2]}$ denotes the lower semi-deviation, a risk measure \cite{Bond2002StatisticalPO}.
}
\label{tab:pointwise-methods}
\end{table*}

\begin{table}[t]
\small
\centering
\begin{tabular}{cccccc}
\toprule
\multirow{2}{*}{Model} & \multirow{2}{*}{Method} & \multicolumn{2}{c}{RewardBench} & \multicolumn{2}{c}{MT-Bench}\\\cmidrule(lr){3-4}\cmidrule(lr){5-6}
&& Acc $\uparrow$ & MSE $\downarrow$ & Acc $\uparrow$ & MSE $\downarrow$ \\
\midrule
\multirow{8}{*}{GPT-4o} & \sc mode & 84.0 & .118 & 80.5 & .145 \\
 & \sc mean & 88.0 & .102 & \underline{83.2} & \underline{.097} \\
 & \sc [mean] & 85.2 & .109 & 80.2 & .146 \\
 & \sc medi & 84.6 & .112 & 80.2 & .142 \\
 & \sc 1p & 84.3 & .116 & 81.0 & .138 \\
 & \sc ram & \textbf{88.4} & .100 & \textbf{83.4} & \textbf{.096} \\
 & \sc qt & 87.9 & \textbf{.096} & \underline{83.2} & .118 \\
 & \sc ps & 87.8 & \textbf{.096} & \underline{83.3} & .103 \\ \hline
\multirow{8}{*}{\shortstack{Llama\\-3.1-8B}} & \sc mode & 72.2 & .192 & 71.9 & .142 \\
 & \sc mean & 79.3 & .155 & \textbf{81.5} & .104 \\
 & \sc [mean] & 75.0 & .186 & 75.0 & .145 \\
 & \sc medi & 73.6 & .191 & 73.9 & .142 \\
 & \sc 1p & 76.0 & .183 & 79.2 & .147 \\
 & \sc ram & \textbf{79.9} & \textbf{.152} & \underline{81.4} & \textbf{.102} \\
 & \sc qt & 79.0 & .164 & 81.1 & .116 \\
 & \sc ps & 78.9 & .161 & \underline{81.4} & .110 \\
\bottomrule
\end{tabular}
\caption{Pointwise results over methods. No-CoT (see Table \ref{tab:pointwise-method-results} for CoT). Text styling follows Table \ref{tab:pointwise-pairwise-short-main-results}.
}
\label{tab:pointwise-method-nocot-results}
\end{table}

\section{Study on Pointwise Scoring}\label{sec:study-score-distribution}

Beyond the mode and mean discussed in prior work and the previous section, we further explore the design space of utilizing distributional output from LLM scorers.

\paragraph{Discrete vs. Continuous}
We say a method is \textit{discrete} if it compares two score distributions by their independently assigned scores that take values in $\{1, \ldots, K\}$. Otherwise, we say it is \textit{continuous}. Discrete scores are often desirable for interpretability (e.g. simple rubrics) but, by the pigeonhole principle, can often result in tied comparisons and fail to capture slight preferences.

\paragraph{Additional Metric: Mean Squared Error}

For our further analysis, we report mean squared error (MSE) in addition to accuracy.
For target labels in $\{0, 1\}$ (a unanimously preferred response), MSE is equivalent to the Brier score.
Accuracy incentivizes predicting a winner instead of a tie as long as oracle confidence is over 50\%.
In contrast, expected MSE is optimized by exactly predicting the oracle confidence, thus serving as a measure of a method's calibration given the judge's distributional output.

On MT-Bench, we generalize the label space to $[0, 1]$ by averaging the human judgments, thus allowing us to evaluate MSE on the full dataset.
In Appendix \ref{app:hetero-preferences}, we analyze alignment between the judgment distributions of LLMs and those of humans (as opposed to the average or majority vote).

\subsection{Methods}\label{sec:pointwise-methods}

Table \ref{tab:pointwise-methods} lists our extended methods for comparing two score distributions. We motivate the newly introduced methods below and provide details in Appendix \ref{app:pointwise-methods}.

Users often prefer discrete methods (e.g. mode) because they are simple to interpret, even if they have lower accuracy than continuous methods (e.g. mean). This motivates the question of where the mode (the status quo method) ranks among discrete methods. To answer this question, we compare the mode to other discrete methods: rounded mean, median, and first percentile (discussed in the next paragraph).

Humans exhibit risk aversion when making decisions. They often disprefer negative outcomes more strongly than they prefer positive outcomes \cite{holt2002riskaversion}. However, this disposition is not captured by the measures of central tendency discussed so far. Thus, we investigate whether incorporating the human disposition of risk aversion into LLM-as-a-judge inference methods improves alignment with human preferences. The methods {\sc 1p} (discrete) and {\sc ram} (continuous) reflect risk aversion. {\sc 1p} takes an approach contrary to {\sc mode}; instead of focusing on where the most mass lies, {\sc 1p} assigns a low score if there is even a 1\% chance of such a low score \cite{siththaranjan2023distributional}. {\sc ram} is {\sc mean} but with each distribution shifted down by its risk $\sigma_-$.

We have used {\sc mode} to represent the status quo LLM-as-a-judge inference method, which uses greedy decoding to obtain a judgment token. However, some prior works use a positive temperature, e.g. to obtain varied CoT chains \cite{zhang2024generative}, in which case a sampled judgment token is decoded rather than the mode. To account for the random nature of sampling, we design the method {\sc ps} as the difference in winrates over repeated pairs of samples from the LLM judge \cite{siththaranjan2023distributional}. {\sc qt} generalizes {\sc medi} and {\sc 1p} by averaging the comparisons over all quantiles, and can be viewed as {\sc ps} but with $X_1$ and $X_2$ positively monotonically correlated.

\subsection{Results}

\paragraph{Main Takeaways}
\begin{customlist}
    \item Table \ref{tab:pointwise-method-nocot-results} shows that the top pointwise methods are the continuous ones ({\sc mean}, {\sc ram}, {\sc qt}, {\sc ps}), in both accuracy and MSE, indicating that they should be chosen over discrete methods.

    \item Even among discrete methods, {\sc mode} has the lowest accuracy in 3 out of the 4 cases, indicating that the mode is a suboptimal choice even if discrete scores are desired.

    \item {\sc 1p} often outperforms {\sc medi} (e.g. 79.2 vs 73.9 accuracy with Llama-3.1-8B on MT-Bench), and {\sc ram} slightly outperforms {\sc mean} (e.g. 79.9 vs. 79.3 accuracy with Llama-3.1-8B on RewardBench), suggesting that risk aversion can be helpful for preference modeling.
\end{customlist}

\begin{table}[t]
\small
\centering
\setlength{\tabcolsep}{4pt}
\begin{adjustbox}{center=\columnwidth}
\begin{tabular}{cccccc}
\toprule
\multirow{2}{*}{Model} & \multirow{2}{*}{Method} & \multicolumn{2}{c}{Tie rate} & \multicolumn{2}{c}{{\sc mean}'s accuracy} \\\cmidrule(lr){3-4}\cmidrule(lr){5-6}
 & & $K=9$ & $K=99$ & $K=9$ & $K=99$ \\
\midrule
\multirow{4}{*}{GPT-4o} & \sc mode & .17 & .20 & 72 & 73 \\
 & \sc [mean] & .16 & .03 & 67 & 53 \\
 & \sc medi & .17 & .09 & 70 & 62 \\
 & \sc 1p & .16 & .08 & 66 & 60 \\ \hline
\multirow{4}{*}{\shortstack{Llama\\-3.1-8B}} & \sc mode & .35 & .24 & 69 & 70 \\
 & \sc [mean] & .26 & .07 & 64 & 61 \\
 & \sc medi & .29 & .11 & 67 & 67 \\
 & \sc 1p & .23 & .08 & 65 & 57 \\
\bottomrule
\end{tabular}
\end{adjustbox}
\caption{Tie analysis for discrete pointwise methods on RewardBench using no-CoT (see Table \ref{tab:pointwise-tie-rewardbench-results} for CoT and Table \ref{tab:pointwise-tie-mtbench-results} for MT-Bench). We report results with two score granularity levels ($K$). Tie rate is the proportion of instances where the method predicts a tie, over which we report {\sc mean}'s accuracy (\%); excess of 50\% or 75\% indicates room for improving accuracy or MSE, respectively.
}
\label{tab:pointwise-tie-rewardbench-small-results}
\end{table}

\begin{table}[t]
\small
\centering
\begin{tabular}{cccccc}
\toprule
& \multirow{2}{*}{Method} & \multicolumn{2}{c}{RewardBench} & \multicolumn{2}{c}{MT-Bench}\\\cmidrule(lr){3-4}\cmidrule(lr){5-6}
&& Acc $\uparrow$ & MSE $\downarrow$ & Acc $\uparrow$ & MSE $\downarrow$ \\
\midrule
\multirow{8}{*}{\rotatebox{90}{\centering GPT-4o}} & \sc mode & 81.7$_\text{--2.3}$ & .134$_\text{+.016}$ & 78.4$_\text{--2.1}$ & .158$_\text{+.013}$ \\
 & \sc mean & \textbf{86.7}$_\text{--1.3}$ & .108$_\text{+.006}$ & \underline{82.9}$_\text{--0.3}$ & .099$_\text{+.002}$ \\
 & \sc [mean] & \underline{86.5}$_\text{+1.3}$ & .127$_\text{+.018}$ & \underline{82.7}$_\text{+2.5}$ & .182$_\text{+.036}$ \\
 & \sc medi & 85.2$_\text{+0.6}$ & .126$_\text{+.014}$ & 81.5$_\text{+1.3}$ & .170$_\text{+.028}$ \\
 & \sc 1p & \underline{86.4}$_\text{+2.1}$ & .116$_\text{+.000}$ & \underline{82.7}$_\text{+1.7}$ & .165$_\text{+.027}$ \\
 & \sc ram & \textbf{86.7}$_\text{--1.7}$ & \textbf{.104}$_\text{+.004}$ & \textbf{83.0}$_\text{--0.4}$ & \textbf{.098}$_\text{+.002}$ \\
 & \sc qt & \underline{86.6}$_\text{--1.3}$ & .114$_\text{+.018}$ & \underline{82.7}$_\text{--0.5}$ & .147$_\text{+.029}$ \\
 & \sc ps & \underline{86.6}$_\text{--1.2}$ & \underline{.105}$_\text{+.009}$ & 82.4$_\text{--0.9}$ & .107$_\text{+.004}$ \\ \hline
\multirow{8}{*}{\rotatebox{90}{\centering Llama-3.1-8B}} & \sc mode & 72.0$_\text{--0.2}$ & .221$_\text{+.029}$ & 75.1$_\text{+3.2}$ & .169$_\text{+.027}$ \\
 & \sc mean & \underline{79.3}$_\text{+0.0}$ & .156$_\text{+.001}$ & \underline{81.3}$_\text{--0.2}$ & \underline{.103}$_\text{--.001}$ \\
 & \sc [mean] & 78.5$_\text{+3.5}$ & .198$_\text{+.012}$ & 80.7$_\text{+5.7}$ & .180$_\text{+.035}$ \\
 & \sc medi & 76.5$_\text{+2.9}$ & .207$_\text{+.016}$ & 80.1$_\text{+6.2}$ & .161$_\text{+.019}$ \\
 & \sc 1p & 78.5$_\text{+2.5}$ & .195$_\text{+.012}$ & \underline{81.5}$_\text{+2.3}$ & .177$_\text{+.030}$ \\
 & \sc ram & \textbf{79.7}$_\text{--0.2}$ & \textbf{.152}$_\text{+.000}$ & \underline{81.1}$_\text{--0.3}$ & \textbf{.102}$_\text{+.000}$ \\
 & \sc qt & 78.7$_\text{--0.3}$ & .177$_\text{+.013}$ & 81.3$_\text{+0.2}$ & .143$_\text{+.027}$ \\
 & \sc ps & 78.6$_\text{--0.3}$ & .163$_\text{+.002}$ & \textbf{81.8}$_\text{+0.4}$ & .111$_\text{+.001}$ \\
\bottomrule
\end{tabular}
\caption{Pointwise results over methods ($K=99$). No-CoT (see Table \ref{tab:pointwise-method-k99-results} for CoT). Subscripts denote change from $K=9$ (Table \ref{tab:pointwise-method-nocot-results}). Text styling follows Table \ref{tab:pointwise-pairwise-short-main-results}.}
\label{tab:pointwise-method-k99-nocot-results}
\end{table}

\begin{figure*}[t]
    \centering
    \includegraphics[width=\linewidth]{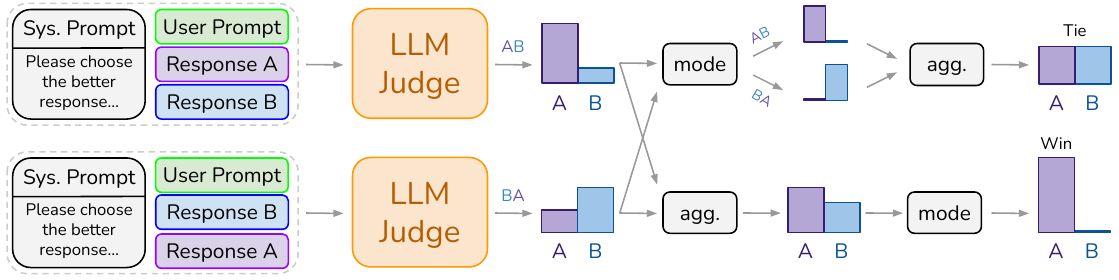}
    \caption{Comparing pairwise LLM-as-a-judge prediction based on \textbf{when} to aggregate the two judgments, one from each response pair presentation order. Pre- vs. post-aggregation (bottom vs. top in figure) can be likened to mean vs. mode, as the former aggregates at the distribution level while the latter aggregates at the text level (if mode is used).
    }
    \label{fig:pairwise}
\end{figure*}

\paragraph{Study: Score Granularity and Ties}
\label{sec:granularity-ties}

We show here that ties explain the finding above that the discrete methods fall behind the continuous ones, and we experiment with score granularity as a remedy.

Table \ref{tab:pointwise-tie-rewardbench-small-results} shows that the discrete methods predict ties on a significant number of instances, on which {\sc mean} is still able to achieve nontrivial accuracy. On the other hand, we find that on instances where a discrete method does not predict a tie, it has similar accuracy to {\sc mean} (not shown; see Table \ref{tab:pointwise-tie-rewardbench-results}), indicating that the performance gap is well explained by ties. Nonetheless, tie behavior varies by method; {\sc mode} has the most ties and the highest {\sc mean} accuracy, amounting to the most untapped signal for determining the better response.

Table \ref{tab:pointwise-tie-rewardbench-small-results} further shows that granularizing the score space from $K=9$ to $K=99$ improves the expressivity of the discrete methods (except for {\sc mode}), drastically reducing the rate of ties, while {\sc mean} accuracies remain similar or decrease.

Table \ref{tab:pointwise-method-k99-nocot-results} expands on the comparison between $K=99$ and $K=9$, reporting results from the same setting in Table \ref{tab:pointwise-method-nocot-results} except for the granularity scale.
Consistent with our motivation, the discrete methods (except for {\sc mode}) improve in accuracy, rivaling the continuous methods.
Although {\sc mode} somewhat makes up for its low accuracy with a lower MSE than most other discrete methods on MT-Bench, it suffers the highest MSE on RewardBench.

Taken together, Tables \ref{tab:pointwise-method-nocot-results}-\ref{tab:pointwise-method-k99-nocot-results}
show that even in use cases where discrete scores are desired, one should consider alternatives to the mode.

\paragraph{Sensitivity to Score Granularity}
In Appendix \ref{app:sensitivity-to-score-granularity}, we analyze the sensitivity of different methods to score granularity, and find theoretically and empirically that the mode is the most sensitive method.

\section{Study on Pairwise Ranking}\label{sec:study-pairwise-ranking}

The judgment styles in Section \ref{sec:pointwise-pairwise}'s overview were scoring (Section \ref{sec:study-score-distribution}) and ranking. In this section, we analyze design decisions for pairwise ranking, and in Section \ref{sec:listwise} listwise ranking.

\subsection{Design Decisions}

As we explain below, the pairwise ranking experiments in Table \ref{tab:pointwise-pairwise-short-main-results} used Likert-2, post-aggregation for the mode, and pre-aggregation for the mean. We now consider alternative choices (see Appendix \ref{app:pairwise-ranking} for details).

\paragraph{Timing of aggregation and measure of central tendency}
Pairwise judgment suffers from position bias, i.e. the LLM judge's sensitivity to the order in which the evaluated texts are presented, which is usually addressed by prompting the LLM judge twice, once for each order of presentation \cite{leerlaif}. We examine the remaining question of whether to aggregate the two judgments \textit{before or after} computing the measure of central tendency (mode, median, or mean), as shown in Figure \ref{fig:pairwise}. Pre- vs. post-aggregation can be likened to mean vs. mode, as the former aggregates at the distribution level while the latter aggregates at the text level (if mode is used).

\paragraph{Granularity} We prompt the judge to express its preference on a $K$-point Likert scale: $[>, <]$ (Likert-2), $[>, =, <]$ (Likert-3), or $[\gg, >, =, <, \ll]$ (Likert-5) \cite{Liu2024RewardMW}. In the prompt (Appendix \ref{app:pairwise-prompts}), the symbols are assigned descriptions such as ``significantly better'' and ``slightly better''.

\begin{table}[t]
\small
\centering
\setlength{\tabcolsep}{4pt}
\begin{adjustbox}{center=\columnwidth}
\begin{tabular}{cccccc}
\toprule
\multirow{2}{*}{Center} & \multirow{2}{*}{\shortstack{Agg.\\Time}} & \multicolumn{2}{c}{RewardBench} & \multicolumn{2}{c}{MT-Bench} \\\cmidrule(lr){3-4}\cmidrule(lr){5-6}
 & & Acc $\uparrow$ & MSE $\downarrow$ & Acc $\uparrow$ & MSE $\downarrow$ \\
\midrule
\multirow{2}{*}{mode} & post & 56.7 & \textbf{.240} & 57.5 & \textbf{.192} \\
 & pre & \textbf{73.1} & .265 & \textbf{78.1} & .222 \\ \hline
\multirow{2}{*}{median} & post & 56.8 & \textbf{.240} & 57.5 & \textbf{.192} \\
 & pre & \textbf{72.9} & .261 & \textbf{78.0} & .218 \\ \hline
\multirow{2}{*}{mean} & post & \underline{73.2} & \textbf{.207} & \textbf{78.2} & \textbf{.144} \\
 & pre & \textbf{73.2} & .222 & \underline{78.1} & .155 \\
\bottomrule
\end{tabular}
\end{adjustbox}
\caption{Pairwise ranking results over methods using Likert-3 comparing pre- and post-aggregation. All methods use Llama-3.1-8B, CoT (see Table \ref{tab:pairwise-method-results} for GPT-4o and no-CoT). Text styling follows Table \ref{tab:pointwise-pairwise-short-main-results}.}
\label{tab:pairwise-method-short-results}
\end{table}

\begin{table}[t]
\small
\centering
\begin{adjustbox}{center=\columnwidth}
\begin{tabular}{ccccc}
\toprule
\multirow{2}{*}{$K$} & \multicolumn{2}{c}{RewardBench} & \multicolumn{2}{c}{MT-Bench} \\\cmidrule(lr){2-3}\cmidrule(lr){4-5}
 & Acc $\uparrow$ & MSE $\downarrow$ & Acc $\uparrow$ & MSE $\downarrow$ \\
\midrule
2 & \textbf{74.2}, 68.6 & \textbf{.187}, .214 & \textbf{80.0}, 76.5 & \textbf{.126}, .135 \\
3 & \underline{73.2}, 66.3 & .222, .240 & 78.1, 70.8 & .155, .155 \\
5 & 70.0, 58.5 & .215, .234 & 77.1, 64.8 & .142, .153 \\
\bottomrule
\end{tabular}
\end{adjustbox}
\caption{Pairwise ranking results over Likert-$K$ scales, using pre-aggregation mean. Llama-3.1-8B, CoT (see Table \ref{tab:pairwise-vspace-results} for GPT-4o and no-CoT). Text styling follows Table \ref{tab:pointwise-pairwise-short-main-results}.}
\label{tab:pairwise-vspace-short-results}
\end{table}

\subsection{Methods Results}
Table \ref{tab:pairwise-method-short-results} shows that accuracy depends little on the measure of central tendency and mostly on when we aggregate, with aggregating first leading to higher accuracy (as much as 56.7\textrightarrow73.1 using the mode on RewardBench).
Considering that the timing of aggregation does not affect accuracy if the two runs agree, this shows that \textbf{even for inconsistent judgments caused by position bias, there is still valuable signal in the relative magnitudes of preference that we can leverage by aggregating first}.

On the other hand, an intuitive explanation for why the measure of central tendency has little effect on accuracy is that the judgment space is small, so there is high correlation between the signs of the measures of central tendency. In fact, they are equivalent in the pre-aggregation Likert-2 setting.

Although aggregating first improves accuracy, it harms MSE for mode and median, which we attribute to the volatile prediction of a binary winner when faced with the uncertain situation of positional inconsistency. Nevertheless, the mean (with either pre- or post-aggregation) is among the top accuracy methods while outperforming all other methods on MSE. This demonstrates the calibration benefit of using the judgment distribution to produce a continuous prediction.

With GPT-4o (not shown; see Tables \ref{tab:pairwise-method-results}, \ref{tab:pairwise-vspace-results}), MSE is always minimized with no-CoT, highlighting the discord between CoT's sharpening effect and calibration.
In Appendix \ref{app:position-bias}, we further analyze position bias and find that CoT increases the occurrence of severe position bias.

\subsection{Granularity Results}\label{sec:pairwise-vspaces}

Table \ref{tab:pairwise-vspace-short-results} compares the Likert scales used in the pairwise ranking prompt. We find that \textbf{Likert-2 performs the best overall},
in line with the AlpacaEval methodology \cite{dubois2024length} but deviating from WB-Reward and Arena-Hard-Auto \cite{lin2024wildbench, li2024crowdsourced}, which use Likert-5.


\section{Listwise Judgment}\label{sec:listwise}

Listwise judgment is not as prevalent as pointwise or pairwise judgment, but it offers efficiency~\cite{zhu2024starling} while granting the judge the maximal context for comparison \cite{Buyl2023RankFormerLL}.

\subsection{Judgment Spaces and Methods}\label{sec:listwise-methods}

We consider two prompts for eliciting listwise preferences over $N$ texts (Appendix \ref{app:listwise-prompts}). Prompt 1 is the one proposed by \citet{zhu2024starling}, which prompts to produce all $\binom{N}{2}$ pairwise preferences and then aggregate them into a sorted list. Prompt 2 skips the intermediate pairwise step and asks to directly produce the list \cite{Liu2023XEvalGM, qin2023large}. We can then extract all pairwise\footnote{We retain the pairwise evaluation setup from previous sections; see Appendix \ref{app:list-eval} for discussion.} preferences from one of the following judgment spaces using the mode (textual output) or the mean (distributional output).

\begin{customlist}
    \item {\sc interm} (Prompt 1): Intermediate pairwise preferences (Likert-3, no-CoT, only one of the two presentation orders), which we view as the reasoning process leading to the list. This efficiently extends pairwise ranking to the listwise setting, similar to batch prompting \cite{Cheng2023BatchPE}.

    \item {\sc list} (Prompt 1): Final list.
    For {\sc mean}, we use the probability distribution over text identifiers at each rank, inspired by \citet{Zhuang2023ASA, reddy2024first}.
    Specifically, at rank $r$, denote $p_r(i)$ as the probability of decoding text (identifier) $i$.
    Decoding text $i$ at rank $r$ implies that any text $j$ not yet decoded will be decoded at a later rank and is thus worse than text $i$, and vice versa.
    Hence, we define $\textsc{mean}(i, j) \in [0, 1]$ as the average of $\frac{p_r(i)}{p_r(i) + p_r(j)}$ over the ranks $r$ until $i$ or $j$ is decoded.

    \item {\sc direct list} (Prompt 2): {\sc list} but with Prompt 2 (no intermediate pairwise step).
\end{customlist}

\subsection{Experimental Setup}\label{sec:listwise-experimental-setup}

\paragraph{Models} Due to the context length required for listwise ranking and the difficulty of the task, we limit our evaluation to GPT-4o. In preliminary experiments, we found poor performance with the smaller models, but in Appendix \ref{sec:deepseek-results} we show that DeepSeek-V3 exhibits similar trends to GPT-4o.

\paragraph{Datasets} We evaluate on Nectar \cite{zhu2024starling}, RM-Bench \cite{Liu2024RMBenchBR}, and MT-Bench \cite{zheng2023judging}.

From Nectar, we use a random subset of 1,000 prompts, each with 7 responses. We discard the GPT-4 judgments included in the dataset and collect our own silver labels using GPT-4o with pairwise ranking (Likert-5, no-CoT, pre-aggregation, mean).
RM-Bench contains 1,327 prompts, each with 3 chosen and 3 rejected responses, yielding 9 pairwise preference labels.
MT-Bench contains 160 prompts, each with 6 responses.
See Appendix \ref{app:datasets} for dataset details.

\subsection{Results}

\begin{table}[t]
\small
\begin{adjustbox}{center=\columnwidth}
\centering
\setlength{\tabcolsep}{3.5pt}
\begin{tabular}{cccccccc}
\toprule
\multirow{2}{*}{Space} & \multirow{2}{*}{Method} & \multicolumn{2}{c}{Nectar} & \multicolumn{2}{c}{RM-Bench} & \multicolumn{2}{c}{MT-Bench} \\\cmidrule(lr){3-4}\cmidrule(lr){5-6}\cmidrule(lr){7-8}
 & & Acc & MSE & Acc & MSE & Acc & MSE \\
\midrule
\multirow{2}{*}{interm} & mode & \textbf{80.4} & .155 & 62.1 & .339 & \textbf{80.8} & .201 \\
 & mean & \textbf{80.4} & \textbf{.048} & \textbf{62.5} & \textbf{.243} & \underline{80.7} & \textbf{.121} \\ \hline
\multirow{2}{*}{list} & mode & \textbf{82.2} & .156 & \textbf{62.4} & .376 & \textbf{83.7} & .189 \\
 & mean & \underline{82.0} & \textbf{.105} & 61.7 & \textbf{.317} & \underline{83.5} & \textbf{.157} \\ \hline
\multirow{2}{*}{direct list} & mode & 86.1 & .138 & \textbf{69.9} & .301 & \textbf{86.8} & .168 \\
 & mean & \textbf{86.4} & \textbf{.087} & 69.4 & \textbf{.267} & 85.9 & \textbf{.133} \\
\bottomrule
\end{tabular}
\end{adjustbox}
\caption{Listwise results (GPT-4o). Text styling follows Table \ref{tab:pointwise-pairwise-short-main-results}.}
\label{tab:listwise-results}
\end{table}

Table \ref{tab:listwise-results} compares mode and mean in the listwise judgment spaces. The two methods have similar accuracy, but the mean has much lower MSE.

We find {\sc direct list} to be the most accurate judgment space (notably, outperforming pointwise scoring on MT-Bench; see Table \ref{tab:pointwise-method-nocot-results}), while {\sc interm} has the lowest MSE.
We hypothesize that {\sc direct list} outperforms {\sc list} due to the intermediate pairwise comparisons playing a similar role to CoT in the pointwise and pairwise settings, where distributional output is captured most intactly without it.
Even so, in Appendix \ref{app:position-bias} we find {\sc direct list} to suffer the most position bias, consistent with \citet{zhu2024starling}, while {\sc interm} has the least.

\section{Conclusion and Recommendations}

We comprehensively evaluated design choices for leveraging LLM judges' distributional output.
For pointwise scoring, we showed that continuous methods (e.g. mean) outperform discrete methods (especially the mode) due to ties.
For pairwise ranking, we related the mean vs. mode comparison to pre- vs. post-aggregation of the two presentation orders' judgments.
Although smaller LLM judges suffer heavily from inconsistent judgments due to position bias, pre-aggregation effectively leverages the relative magnitudes of preference.

We showed that CoT collapses the spread of the judgment distribution, often hurting performance. This applies even to the challenging setting of listwise ranking, where accuracy was maximized by directly predicting the list without an intermediate pairwise step. We hope that highlighting this limitation of CoT encourages the development of reasoning mechanisms that preserve output diversity and calibration for judgment and other subjective or open-ended tasks.

\paragraph{Recommendations}

We summarize our findings into guidelines for choosing judgment settings. Large judges like GPT-4o should use pairwise ranking no-CoT, or direct listwise ranking as an efficient alternative. Smaller judges like Llama-3.1-8B should use pointwise scoring no-CoT. The mean should be used instead of the mode, but these setting guidelines apply even if one uses the mode.

\section*{Limitations}

\paragraph{Downstream Performance}
In this paper, we evaluate LLM-as-a-judge design decisions by their performance on preference modeling datasets. However, this setup may not reveal downstream impacts. We do not explore the impact of distributional judgments on reinforcement learning from AI feedback (RLAIF) \cite{leerlaif} or human decision making.

\paragraph{Training}
Our experiments involve off-the-shelf LLMs as judges without specific tuning. We do not explore training LLM judges to express distributional judgments \cite{SaadFalcon2024LMUnitFE}. Similarly, we exclude distributional reward models \cite{Dorka2024QuantileRF} from the scope of our study.

\paragraph{CoT}
We conclude from our results that CoT often hurts judgment performance. However, we only consider one prompt design per setting for eliciting CoT reasoning (Appendix \ref{app:prompts}) and do not perform prompt optimization. Furthermore, we do not consider more extensive test-time scaling, such as asking the judge to produce its own reference response \cite{zheng2023judging} or aggregating many CoT judgment runs \cite{zhang2024generative, Stureborg2024LargeLM}.

\paragraph{Natural Language Judgments}
A valuable aspect of LLM-as-a-judge is its ability to augment judgments with interpretable rationales \cite{mahan2024generative, Byun2024ARESAR, Ye2024BeyondSR, Cao2024CompassJudger1AJ}. However, the distributional judgments we consider here are limited to those that are easily quantifiable, and we do not propose methods for leveraging distributional output over natural language feedback. While it is possible to continue decoding a rationale after the judgment, the rationale will be conditioned on the decoded judgment and not reflect the distribution over the unchosen judgment options. One approach could be to decode several rationales, each conditioned on a different judgment option.


\bibliography{main}

\appendix

\section{Related Work: LLM-as-a-judge Settings}\label{app:rw-settings}

LLM-as-a-judge has been used in pointwise (evaluating one response at a time), pairwise (two), and listwise (many) settings.

Pairwise judgment has the advantage of grounding each evaluated response in the other, creating for a more calibrated task and leading to better agreement with humans \cite{liusie2023zero}. However, due to intransitivity in pairwise preferences \cite{liu2024aligning}, the cost to sort $N$ texts is $O(N^2)$ rather than $O(N\log{N})$, compared to $O(N)$ in the pointwise setting. In addition, pairwise comparisons are susceptible to position bias \cite{shi2024judging}, which often must be addressed by running both orders and aggregating the results \cite{zeng2023evaluating, li2024crowdsourced}. Pairwise comparisons have also been shown to be more biased toward superficial traits such as verbosity and tone, in both LLM and human judges \cite{PAYNE1976366, jeong2024prepair}, although pointwise scoring more easily falls victim to adversarial responses \cite{raina2024llm}.

The listwise setting provides the maximal amount of context to the judge while keeping the same compute complexity as the pointwise setting. However, the judgment task becomes much more challenging \cite{qin2023large, koo2023benchmarking}, especially due to the amplified position bias \cite{zhu2024starling}, and the combinatorially many orders makes it severely more daunting to address than in the pairwise case \cite{Tang2023FoundIT, Qin2024LAMPOLL}. To mitigate position bias, \citet{zhu2024starling} leverage intermediate pairwise preferences for aggregation into a sorted list. \citet{Zhuang2023ASA, reddy2024first} use the distribution from a single output token for listwise passage reranking, a related task to LLM-as-a-judge.

\section{Methods}\label{app:methods}

Let $A_1$ and $A_2$ be two texts to compare.
We describe the methods of predicting a value in $[-1, 1]$ that signifies the advantage of $A_1$ over $A_2$.
For accuracy, we take the sign of the prediction. For MSE, we rescale predictions from $[-1, 1]$ to $[0, 1]$.

The prompts for the various settings are in Appendix \ref{app:prompts}.

\subsection{Pointwise Methods}\label{app:pointwise-methods}

We elaborate on the pointwise methods introduced in Section \ref{sec:pointwise-methods}. The LLM judge independently judges $A_1$ and $A_2$, producing score distributions over $\{1, \ldots, K\}$ for an integer $K$ that define independent random variables $X_1$ and $X_2$, which are used to compare $A_1$ and $A_2$.

The methods are invariant to scaling and translating the judgment space, and all methods that do not take expectations $\mathbb{E}$ (which assumes linearity) are invariant to applying a positive monotone transformation to the judgment space.
The methods are all equivalent if the distributions are deterministic, thus our experiments evaluate their ability to leverage the LLM judge's \textit{distributional} output.

The denominator in {\sc mean} normalizes it into $[-1, 1]$, similar to ${\rm sgn}(x) = \frac{x}{|x|}$, taking $\frac00$ to be 0. The $\sigma$ term lowers the magnitude of the prediction in the presence of uncertainty in a continuous manner. Specifically, let $k, k' \in \{1, \ldots, K\}$ with $k \neq k'$. For $\epsilon \in [0, 1]$, let $X_1$ have a two-point distribution $(1-\epsilon)\delta_k + \epsilon\delta_{k'}$ and let $X_2$ have a deterministic distribution $\delta_k$. Then $\textsc{mean}(X_1, X_2)$ as a function of $\epsilon$ is continuous at $\epsilon = 0$.

For {\sc mean}, {\sc ram}, and {\sc ps}, we assume $X_1$ and $X_2$ to be independent, but {\sc qt} can be viewed as {\sc ps} but with $X_1$ and $X_2$ positively monotonically correlated.
By incorporating the sign function, {\sc qt} and {\sc ps} are less sensitive to extremal values than {\sc mean}. 
In addition, {\sc qt} and {\sc ps} can model intransitive preferences, e.g. $\textsc{ps}(X_1, X_2),\ \textsc{ps}(X_2, X_3) > 0 \nRightarrow \textsc{ps}(X_1, X_3) > 0$, which we analyze in Appendix \ref{app:transitivity}.


\subsection{Pairwise Methods}\label{app:pairwise-methods}

In the pairwise setting, we consider two prompting approaches for jointly evaluating the two texts $A_1$ and $A_2$: scoring both texts (\S\ref{app:pairwise-scoring}) and expressing a preference (\S\ref{app:pairwise-ranking}).

To account for position bias, we prompt the LLM judge once for each order of presentation. For an order $\mathbf{o} \in \mathbf{O} \coloneq \{(1, 2), (2, 1)\}$, we use $\mathbf{o}$ to denote dependence on the order $(A_{o_1}, A_{o_2})$ in which the texts appear in the prompt.

\subsubsection{Pairwise Scoring}\label{app:pairwise-scoring}

For a given order $\mathbf{o}$, the LLM judge scores the two texts jointly in the same run. If we could obtain the joint distribution $P(X^\mathbf{o}_{o_1}, X^\mathbf{o}_{o_2})$, we could compute the marginals and use any method in Table \ref{tab:pointwise-methods}. However, the judge first outputs the score for $A_{o_1}$ and conditions on it when outputting the score for $A_{o_2}$, i.e. $X^\mathbf{o}_{o_1}$ and $X^\mathbf{o}_{o_2}$ are not independent. Thus, the full joint distribution $P(x_{o_1}, x_{o_2}) = P(x_{o_1}) P(x_{o_2} \mid x_{o_1})$ can only be obtained by injecting each $x_{o_1} \in \{1, \ldots, K\}$ into the context to access $P(x_{o_2} \mid x_{o_1})$. This is feasible with local models but not with API-access models where inference cost scales with $K$. Hence, we stick to a single run and condition on the greedily decoded $x_{o_1} = \arg\max_{k} P(X^\mathbf{o}_{o_1} = k)$, giving us
\begin{align*}
    X^\mathbf{o}_\Delta \overset{d}{=} (X^\mathbf{o}_1 - X^\mathbf{o}_2) \mid (X^\mathbf{o}_{o_1} = x_{o_1})
\end{align*}
as a proxy for the score difference $X^\mathbf{o}_1 - X^\mathbf{o}_2$. Semantically, $X^\mathbf{o}_\Delta$ is symmetric (i.e. there should be no prior preference for $A_1$ or $A_2$), so we would like our scalar judgment to be some measure of \textit{central} tendency (mode, median, or mean).
As shown in Figure \ref{fig:pairwise}, we also have the choice of whether to aggregate the judgments from the two orders of presentation \textit{before or after} computing the measure of central tendency.

For pre-aggregation, we simply take the mixture distribution,
\begin{align*}
    P(X_\Delta = \delta) \coloneq \frac{1}{|\mathbf{O}|}\sum_{\mathbf{o}\in\mathbf{O}} P(X^\mathbf{o}_\Delta = \delta)
\end{align*}
for all $\delta \in \{-(K-1), \ldots, K-1\}$,
leaving more sophisticated approaches such as the convolution and Wasserstein barycenter for future study:
\begin{align*}
    \textsc{agg-mode} &\coloneq {\rm sgn}({\rm mode}(X_\Delta))\\
    \textsc{agg-medi} &\coloneq {\rm sgn}({\rm median}(X_\Delta))\\
    \textsc{agg-mean} &\coloneq \textsc{mean}(X_\Delta),
\end{align*}
where {\sc mean} is defined as in Table \ref{tab:pointwise-methods}, overloaded to take a single argument representing $X_1-X_2$.

For post-aggregation, we sum the two scalar judgments from the two orders and normalize:
\begin{align*}
    \textsc{mode-agg} &\coloneq \frac{\sum_{\mathbf{o} \in \mathbf{O}}{\rm mode}(X^\mathbf{o}_\Delta)}{\sum_{\mathbf{o} \in \mathbf{O}}|{\rm mode}(X^\mathbf{o}_\Delta)|}\\
    \textsc{medi-agg} &\coloneq \frac{\sum_{\mathbf{o} \in \mathbf{O}}{\rm median}(X^\mathbf{o}_\Delta)}{\sum_{\mathbf{o} \in \mathbf{O}}|{\rm median}(X^\mathbf{o}_\Delta)|}\\
    \textsc{mean-agg} &\coloneq \frac{1}{|\mathbf{O}|}\sum_{\mathbf{o} \in \mathbf{O}} \textsc{mean}(X^\mathbf{o}_\Delta),
\end{align*}
taking $\frac00 \coloneq 0$.

\subsubsection{Pairwise Ranking}\label{app:pairwise-ranking}

We prompt the LLM judge to express its preference on a $K$-point Likert scale: $[>, <]$ (Likert-2), $[>, =, <]$ (Likert-3), or $[\gg, >, =, <, \ll]$ (Likert-5).
Assigning the symbols $[\gg, >, =, <, \ll]$ the numerical values $[2, 1, 0, -1, -2]$, the methods for pairwise ranking then follow those above for pairwise scoring. We remark that the `mode' and `mean' for pairwise scoring and pairwise ranking in Table \ref{tab:pointwise-pairwise-short-main-results} are with post-aggregation and pre-aggregation, respectively.

\subsection{Listwise Methods}\label{app:listwise-methods}

The listwise methods are introduced in Section \ref{sec:listwise-methods}.

\section{Prompts}\label{app:prompts}

We present representative example prompts to illustrate the different settings. The prompts are adapted from MT-Bench \cite{zheng2023judging}. Auxiliary modifications are not shown, such as the prompt for second-turn evaluation in MT-Bench.

\subsection{Judgment Extraction Details}

To identify the token position containing the judgment, we use the specified format when available (e.g. ``Rating A: \texttt{\{rating\_a\}}.'' in pairwise scoring). Otherwise, we use the latest token position with more than 0.5 total probability assigned to judgment tokens. If no valid token is found, we default the judgment to the minimum score of 1 in the scoring setting, and to a tie in the ranking setting. (For Nectar experiments, we exclude instances with invalid silver-label judgments.)

For the local models (Llama-3.1-8B, Mistral-7B, Prometheus-2-7B) in no-CoT prompting, we force a prefix of the assistant's response (e.g. ``Rating A: '') and use a single output token as the judgment token position.

\subsection{Pointwise Prompts}\label{app:pointwise-prompts}

\begin{tcolorbox}[colback=white!5!white, colframe=black!75!black, width=\columnwidth, title={System prompt for pointwise scoring (\Magenta{CoT}, $K=9$)}]
\begin{PromptVerbatim}
Please act as an impartial judge and evaluate the quality of the response provided by an AI assistant to the user prompt displayed below. Your evaluation should consider factors such as the helpfulness, relevance, accuracy, depth, creativity, level of detail, and ethicality of the response. \Magenta{Begin your evaluation by providing a short explanation.} Be as objective as possible. \Magenta{After providing your explanation,} please rate the response with an integer score from 1 to 9, without further explanation.
\end{PromptVerbatim}
\label{pmp:sys-point-cot-k9}
\end{tcolorbox}

\begin{tcolorbox}[colback=white!5!white, colframe=black!75!black, width=\columnwidth, title={System prompt for pointwise scoring (\Magenta{no-CoT}, $K=9$)}]
\begin{PromptVerbatim}
Please act as an impartial judge and evaluate the quality of the response provided by an AI assistant to the user prompt displayed below. Your evaluation should consider factors such as the helpfulness, relevance, accuracy, depth, creativity, level of detail, and ethicality of the response. Be as objective as possible. Please rate the response with an integer score from 1 to 9, without further explanation.
\end{PromptVerbatim}
\label{pmp:sys-point-nocot-k9}
\end{tcolorbox}

\begin{tcolorbox}[colback=blue!5!white, colframe=blue!75!black, width=\columnwidth, title={User prompt for pointwise judgment}]
\begin{PromptVerbatim}
[User Prompt]
\VarField{User Prompt}
[End User Prompt]

[Start of Assistant's Answer]
\VarField{Assistant's Answer}
[End of Assistant's Answer]
\end{PromptVerbatim}
\label{pmp:user-point}
\end{tcolorbox}

\subsection{Pairwise Prompts}\label{app:pairwise-prompts}

\begin{tcolorbox}[colback=white!5!white, colframe=black!75!black, width=\columnwidth, title={System prompt for pairwise \Magenta{scoring} (CoT, $K=9$)}]
\begin{PromptVerbatim}
Please act as an impartial judge and evaluate the quality of the responses provided by two AI assistants to the user prompt displayed below. Your evaluation should consider factors such as the helpfulness, relevance, accuracy, depth, creativity, level of detail, and ethicality of their responses. Begin your evaluation by comparing the two responses and provide a short explanation. Avoid any position biases and ensure that the order in which the responses were presented does not influence your decision. Do not allow the length of the responses to influence your evaluation. Do not favor certain names of the assistants. Be as objective as possible. After providing your explanation, \Magenta{output your final verdict by strictly following this format: "Rating A: \{rating_a\}. Rating B: \{rating_b\}.", where "\{rating_a\}" and "\{rating_b\}" are integer scores from 1 to 9.}
\end{PromptVerbatim}
\label{pmp:sys-scalars-cot-k9}
\end{tcolorbox}

For pairwise ranking with the local models, we use a different prompt from the one below. We found that they would often fail to include the braces specified in the judgment format, so we omit them when prompting these models.

\begin{tcolorbox}[colback=white, colframe=black, width=\columnwidth, title={System prompt for pairwise \Magenta{ranking} (\Magenta{Likert-5}, CoT)}]
\begin{PromptVerbatim}
Please act as an impartial judge and evaluate the quality of the responses provided by two AI assistants to the user prompt displayed below. You should choose the assistant that follows the user's instructions and answers the user's question better. Your evaluation should consider factors such as the helpfulness, relevance, accuracy, depth, creativity, level of detail, and ethicality of their responses. Begin your evaluation by comparing the two responses and provide a short explanation. Avoid any position biases and ensure that the order in which the responses were presented does not influence your decision. Do not allow the length of the responses to influence your evaluation. Do not favor certain names of the assistants. Be as objective as possible. After providing your explanation, \Magenta{output your final verdict by strictly following this format: "[[>>]]" if assistant A is significantly better, "[[>]]" if assistant A is slightly better, "[[=]]" for a tie, "[[<]]" if assistant B is slightly better, and "[[<<]]" if assistant B is significantly better.}
\end{PromptVerbatim}
\label{pmp:sys-lik5-cot}
\end{tcolorbox}

\begin{tcolorbox}[colback=blue!5!white, colframe=blue!75!black, width=\columnwidth, title={User prompt for pairwise judgment}]
\begin{PromptVerbatim}
[User Prompt]
\VarField{User Prompt}
[End User Prompt]

[Start of Assistant A's Answer]
\VarField{Assistant A's Answer}
[End of Assistant A's Answer]

[Start of Assistant B's Answer]
\VarField{Assistant B's Answer}
[End of Assistant B's Answer]
\end{PromptVerbatim}
\label{pmp:user-pair}
\end{tcolorbox}

\subsection{Listwise Prompts}\label{app:listwise-prompts}

The listwise prompts are adapted from Nectar \cite{zhu2024starling}.

\begin{tcolorbox}[colback=white!5!white, colframe=black!75!black, width=\columnwidth, title={System prompt for listwise judgment ($N=7$), \Magenta{with intermediate pairwise preferences}}]
\begin{PromptVerbatim}
We are interested in ranking different large language model chat completions to a conversation. Please act as an impartial judge and evaluate the quality of the completions provided by the 7 AI assistants. Your evaluation should consider factors such as the helpfulness, relevance, accuracy, depth, creativity, level of detail, and ethicality of their responses.

\Magenta{After the conversation and assistant responses, the section "PAIRWISE EVALUATION ORDER" will specify the order in which to perform pairwise comparisons. Output an array in which, for each pairwise comparison, you choose the letter of the better response, or '=' for a tie. The array should be comma-separated and enclosed in double square brackets.}

\Magenta{Then, considering these pairwise rankings,} please rank all 7 responses from best to worst (breaking ties randomly), strictly in the following format: [[_, _, _, _, _, _, _]] where '_' contains an assistant's letter name.

Avoid any position biases and ensure that the order in which the responses were presented does not influence your decision. Do not allow the length of the responses to influence your evaluation. Do not favor certain names of the assistants. Be as objective as possible.
\end{PromptVerbatim}
\label{pmp:sys-list-interm}
\end{tcolorbox}

\begin{tcolorbox}[colback=white!5!white, colframe=black!75!black, width=\columnwidth, title={System prompt for listwise judgment ($N=7$), \Magenta{without intermediate pairwise preferences}}]
\begin{PromptVerbatim}
We are interested in ranking different large language model chat completions to a conversation. Please act as an impartial judge and evaluate the quality of the completions provided by the 7 AI assistants. Your evaluation should consider factors such as the helpfulness, relevance, accuracy, depth, creativity, level of detail, and ethicality of their responses.

Please rank all 7 responses from best to worst (breaking ties randomly), strictly in the following format: [[_, _, _, _, _, _, _]] where '_' contains an assistant's letter name.

Avoid any position biases and ensure that the order in which the responses were presented does not influence your decision. Do not allow the length of the responses to influence your evaluation. Do not favor certain names of the assistants. Be as objective as possible.
\end{PromptVerbatim}
\label{pmp:sys-list-direct}
\end{tcolorbox}

\begin{tcolorbox}[colback=blue!5!white, colframe=blue!75!black, width=\columnwidth, title={User prompt for listwise judgment ($N=7$). The presentation order is randomized. The pairwise evaluation order is randomized every instance for the prompt with intermediate pairwise preferences, and omitted for the prompt without intermediate pairwise preferences.}]
\begin{PromptVerbatim}
[CONVERSATION START]
\VarField{Conversation}
[CONVERSATION END]

[MODEL A RESPONSE START]
\VarField{Model A's response}
[MODEL A RESPONSE END]

[MODEL B RESPONSE START]
\VarField{Model B's response}
[MODEL B RESPONSE END]

[MODEL C RESPONSE START]
\VarField{Model C's response}
[MODEL C RESPONSE END]

[MODEL D RESPONSE START]
\VarField{Model D's response}
[MODEL D RESPONSE END]

[MODEL E RESPONSE START]
\VarField{Model E's response}
[MODEL E RESPONSE END]

[MODEL F RESPONSE START]
\VarField{Model F's response}
[MODEL F RESPONSE END]

[MODEL G RESPONSE START]
\VarField{Model G's response}
[MODEL G RESPONSE END]

PAIRWISE EVALUATION ORDER: [(G, C), (B, G), (C, D), (A, E), (G, A), (A, D), (B, A), (B, E), (B, F), (A, C), (E, C), (E, F), (B, D), (F, A), (G, E), (F, C), (F, D), (C, B), (F, G), (D, G), (E, D)]
\end{PromptVerbatim}
\label{pmp:user-list}
\end{tcolorbox}

\section{Datasets}\label{app:datasets}

RewardBench \cite{lambert2024rewardbench} is a reward model benchmark spanning chat, reasoning, and safety. Each instance consists of a prompt, a chosen response, and a rejected response, all manually verified. The dataset categories are \textit{Chat}, with 358 instances sourced from AlpacaEval \cite{alpaca_eval} and MT-Bench \cite{zheng2023judging}; \textit{Chat Hard}, with 456 instances sourced from MT-Bench and LLMBar \cite{zeng2023evaluating}; \textit{Safety}, with 740 instances sourced from XSTest \cite{Rttger2023XSTestAT}, Do-Not-Answer \cite{Wang2023DoNotAnswerAD}, and original data; and \textit{Reasoning}, with 1431 instances sourced from PRM800k \cite{Lightman2023LetsVS} and HumanEvalPack \cite{Muennighoff2023OctoPackIT}. Except for excluding the prior sets category, we follow the original work and compute the final score as the average of the category scores.

MT-Bench \cite{zheng2023judging} is a dataset of multi-turn questions spanning writing, roleplay, extraction, reasoning, math, coding, knowledge I (STEM), and knowledge II (humanities/social science). There are 3,355 (prompt, model pair, human judge, turn) tuples, 1,814 unique (prompt, model pair, turn) tuples, and 80 unique prompts each with two turns of interaction. To evaluate accuracy, we use the 1,132 instances with unanimous non-tie human judgments. To evaluate MSE, we use all 1,814 instances and set the label of an instance to the average of the human judgments, where a 0 or 1 represents the evaluated winner, and a 0.5 represents a tie.

Nectar \cite{zhu2024starling} is a dataset of 183k prompts each with 7 model responses. The prompts are sourced from Anthropic-HH \cite{Bai2022TrainingAH}, LMSYS-Chat-1M \cite{Zheng2023LMSYSChat1MAL}, UltraFeedback \cite{Cui2023UltraFeedbackBL}, and ShareGPT. We use a random subset of size 1,000.

RM-Bench \cite{Liu2024RMBenchBR} is a reward model benchmark focusing on sensitivity to subtle content differences and resistance to style biases. There are 1,327 instances spanning chat, code, math, and safety. Similar to RewardBench, we follow the original work and average the 4 category scores. For each prompt, there are 3 pairs of (chosen, rejected) responses, where each pair is written with a particular style regarding concision and whether formatted as plain text or markdown.

The HelpSteer2 dataset \cite{Wang2024HelpSteer2OD} contains multiple human ratings on a 0-4 scale for five attributes (helpfulness, correctness, coherence, complexity, verbosity) for each (prompt, response) instance. We use a random subset of size 1,000.

\subsection{Listwise Evaluation}\label{app:list-eval}

For the listwise setting, we use the same evaluation setup as with the pointwise and pairwise setting.\footnote{This means that our MT-Bench results are directly comparable across settings.} We concern ourselves with agreement at the pair level rather than the list level because pairwise preferences are sufficient to produce a total order, such as by choosing the maximum likelihood order \cite{liu2024aligning, Liusie2024EfficientLC} or with graph-theoretic methods \cite{tideman, schulze, Li2024TSPRankBP}. Thus, pairwise preferences are an adequate unit at which to measure agreement, and the aggregation into a total order may be modularized away for experimental simplicity.

To compute accuracy on Nectar with silver labels (Section \ref{sec:listwise-experimental-setup}), we take the sign of the silver label as the silver label for accuracy.

\section{Additional Results}\label{app:add-res}

Table \ref{tab:pointwise-pairwise-main-results} is an expanded version of Table \ref{tab:pointwise-pairwise-short-main-results}, providing a subset breakdown of RewardBench. We observe particularly large gains for pointwise scoring on the Reasoning subset, e.g. absolute +7.7\% and +17.1\% for GPT-4o and Llama-3.1-8B.

\begin{table*}[t]
\small
\centering
\begin{adjustbox}{center=\columnwidth}
\begin{tabular}{ccccccccc}
\toprule
\multirow{2}{*}{Model}& \multirow{2}{*}{Setting} & \multirow{2}{*}{Method} & \multicolumn{5}{c}{RewardBench} & \multirow{2}{*}{MT-Bench} \\\cmidrule(lr){4-8}
 & & &Chat & Chat Hard & Safety & Reasoning & Total &\\
\midrule
\multirow{6}{*}{GPT-4o} & \multirow{2}{*}{point score} & mode & 95.8, 89.7 & 76.0, 77.4 & 89.3, 88.5 & 79.5, 80.3 & 85.1, 84.0 & 81.9, 80.5 \\
 &  & mean & \textbf{97.1}, 94.3 & 75.2, \textbf{79.8} & \textbf{90.3}, \underline{89.7} & \underline{87.0}, \textbf{88.0} & \underline{87.4}, \textbf{88.0} & \textbf{83.6}, \underline{83.2} \\ \cline{2-9}
 & \multirow{2}{*}{pair score} & mode & \underline{97.3}, \textbf{97.9} & 69.0, \underline{70.7} & \underline{89.1}, \textbf{89.5} & 91.3, 91.3 & 86.7, \underline{87.4} & \underline{86.2}, \underline{86.5} \\
 &  & mean & \underline{97.2}, \underline{97.8} & \underline{69.7}, \textbf{70.8} & \textbf{89.5}, \textbf{89.5} & \underline{91.9}, \textbf{92.4} & \underline{87.1}, \textbf{87.6} & \underline{86.3}, \textbf{86.8} \\ \cline{2-9}
 & \multirow{2}{*}{pair rank} & mode & 96.9, \underline{97.6} & 76.4, \underline{79.1} & 89.0, \textbf{90.9} & 91.4, 91.3 & 88.4, 89.7 & 86.3, 85.6 \\
 &  & mean & 96.2, \textbf{98.3} & 76.6, \textbf{79.4} & 88.5, \underline{90.8} & \underline{93.0}, \textbf{93.6} & 88.6, \textbf{90.5} & \textbf{87.3}, 85.9 \\ \hline
\multirow{6}{*}{Llama-3.1-8B} & \multirow{2}{*}{point score} & mode & 83.8, 87.6 & \underline{57.6}, \underline{58.0} & 76.2, 78.2 & 60.8, 64.8 & 69.6, 72.2 & 74.9, 71.9 \\
 &  & mean & 89.0, \textbf{95.8} & \underline{58.6}, \textbf{58.8} & 73.0, \textbf{80.8} & 70.2, \textbf{81.9} & 72.7, \textbf{79.3} & 78.7, \textbf{81.5} \\ \cline{2-9}
 & \multirow{2}{*}{pair score} & mode & 92.0, 94.3 & \textbf{45.4}, \textbf{45.4} & 69.5, \underline{78.8} & 79.9, 82.4 & 71.7, 75.2 & \textbf{82.6}, \underline{82.4} \\
 &  & mean & 92.6, \textbf{95.8} & \underline{44.6}, \underline{45.0} & 69.3, \textbf{78.9} & 81.7, \textbf{87.6} & 72.1, \textbf{76.8} & \underline{82.3}, 81.2 \\ \cline{2-9}
 & \multirow{2}{*}{pair rank} & mode & 76.7, 65.2 & \textbf{52.3}, 48.1 & 71.0, 66.4 & 75.6, 55.8 & 68.9, 58.9 & 76.2, 63.0 \\
 &  & mean & \underline{90.5}, \textbf{93.0} & \underline{50.0}, 44.1 & \textbf{78.1}, 72.7 & \textbf{78.3}, 64.6 & \textbf{74.2}, 68.6 & \textbf{80.0}, 76.5 \\ \hline
\multirow{6}{*}{Mistral-7B} & \multirow{2}{*}{point score} & mode & 52.4, 66.2 & \underline{51.5}, \underline{50.5} & \textbf{79.9}, 75.7 & 57.8, 58.4 & 60.4, 62.7 & 59.5, 66.2 \\
 &  & mean & 54.5, \textbf{82.1} & \textbf{53.5}, \underline{49.1} & \textbf{79.9}, \underline{79.6} & 67.2, \textbf{77.5} & 63.8, \textbf{72.1} & 62.6, \textbf{74.0} \\ \cline{2-9}
 & \multirow{2}{*}{pair score} & mode & 87.6, \underline{89.9} & \underline{40.2}, \underline{40.4} & \underline{74.0}, \underline{73.0} & 67.4, 72.4 & 67.3, 68.9 & \underline{79.3}, \underline{79.8} \\
 &  & mean & \underline{89.2}, \textbf{91.1} & \textbf{41.2}, \underline{39.3} & \textbf{74.1}, \underline{73.4} & 67.8, \textbf{80.2} & 68.1, \textbf{71.0} & \underline{80.0}, \textbf{80.4} \\ \cline{2-9}
 & \multirow{2}{*}{pair rank} & mode & 51.0, 51.5 & \textbf{51.0}, 46.2 & 62.2, 66.8 & 61.0, 50.8 & 56.3, 53.8 & 51.5, 51.5 \\
 &  & mean & \underline{79.5}, \textbf{81.7} & 39.3, 36.3 & \textbf{73.1}, 67.7 & \textbf{63.8}, 50.6 & \textbf{63.9}, 59.1 & \textbf{73.5}, 65.5 \\ \hline
\multirow{6}{*}{Prometheus-2-7B} & \multirow{2}{*}{point score} & mode & 81.3, 81.7 & 50.5, 50.8 & 65.9, 73.4 & 59.2, 58.2 & 64.3, 66.0 & 72.5, 73.5 \\
 &  & mean & 82.4, \textbf{92.2} & 48.9, \textbf{54.4} & 65.7, \textbf{76.6} & 61.3, \textbf{77.6} & 64.6, \textbf{75.2} & 72.1, \textbf{81.6} \\ \cline{2-9}
 & \multirow{2}{*}{pair score} & mode & \underline{91.2}, \underline{92.0} & \textbf{44.1}, \underline{43.6} & \textbf{75.9}, 69.4 & 72.7, 69.6 & \textbf{71.0}, 68.7 & 78.4, \underline{80.8} \\
 &  & mean & \underline{91.3}, \textbf{93.0} & 42.7, \underline{43.0} & 74.9, 72.0 & \underline{73.0}, \textbf{75.1} & 70.5, \underline{70.8} & 78.3, \textbf{80.9} \\ \cline{2-9}
 & \multirow{2}{*}{pair rank} & mode & 55.6, 45.4 & \textbf{51.0}, \underline{50.0} & 66.6, 49.7 & 65.3, 47.8 & 59.6, 48.2 & 51.5, 43.0 \\
 &  & mean & \textbf{90.5}, 45.0 & 44.3, \underline{50.7} & \textbf{74.2}, 55.5 & \textbf{69.8}, 44.1 & \textbf{69.7}, 48.8 & \textbf{75.4}, 33.4 \\
\bottomrule
\end{tabular}
\end{adjustbox}
\caption{Mode vs. mean and CoT vs. no-CoT (comma-separated) accuracy results (\%). Expanded version of Table \ref{tab:pointwise-pairwise-short-main-results}.
}
\label{tab:pointwise-pairwise-main-results}
\end{table*}

Tables \ref{tab:pointwise-method-results} ($K=9$) and \ref{tab:pointwise-method-k99-results} ($K=99$) show pointwise results over methods (expanded versions of Tables \ref{tab:pointwise-method-nocot-results} and \ref{tab:pointwise-method-k99-nocot-results}).
Tables \ref{tab:pointwise-tie-rewardbench-results} and \ref{tab:pointwise-tie-mtbench-results} show expanded tie analyses on RewardBench (simplified in Table \ref{tab:pointwise-tie-rewardbench-small-results}) and MT-Bench.

Table \ref{tab:pairwise-method-results} shows pairwise ranking results over methods, extending Table \ref{tab:pairwise-method-short-results}. Table \ref{tab:pairwise-vspace-results} compares the Likert scales used for pairwise ranking, extending Table \ref{tab:pairwise-vspace-short-results}.
In Table \ref{tab:pairwise-vspace-results}, the most calibrated setting on MT-Bench is (GPT-4o) Likert-5 no-CoT, achieving a 31\% lower MSE than the most accurate setting, Likert-2 CoT, suggesting that a finer granularity has potential to improve calibration \cite{Liu2024RewardMW}.
With GPT-4o in Tables \ref{tab:pairwise-method-results} and \ref{tab:pairwise-vspace-results}, MSE is always minimized with no-CoT, highlighting the discord between CoT's sharpening effect and calibration. This result is in line with AlpacaEval \cite{dubois2024length}, which uses no-CoT and judgment probabilities, but deviating from WB-Reward and Arena-Hard-Auto \cite{lin2024wildbench, li2024crowdsourced}, which use CoT and decoded judgments.

\begin{table*}[t]
\small
\centering
\begin{tabular}{cccccc}
\toprule
\multirow{2}{*}{Model} & \multirow{2}{*}{Method} & \multicolumn{2}{c}{RewardBench} & \multicolumn{2}{c}{MT-Bench}\\\cmidrule(lr){3-4}\cmidrule(lr){5-6}
&& Acc $\uparrow$ & MSE $\downarrow$ & Acc $\uparrow$ & MSE $\downarrow$ \\
\midrule
\multirow{8}{*}{GPT-4o} & \sc mode & 85.1, 84.0 & .116, .118 & 81.9, 80.5 & .152, .145 \\
 & \sc mean & \underline{87.4}, 88.0 & \underline{.099}, .102 & 83.6, \underline{83.2} & .115, \underline{.097} \\
 & \sc [mean] & 85.1, 85.2 & .116, .109 & 82.0, 80.2 & .150, .146 \\
 & \sc medi & 85.0, 84.6 & .116, .112 & 82.0, 80.2 & .150, .142 \\
 & \sc 1p & 84.8, 84.3 & .120, .116 & 82.6, 81.0 & .141, .138 \\
 & \sc ram & \underline{87.4}, \textbf{88.4} & \underline{.099}, .100 & \textbf{83.9}, \underline{83.4} & .115, \textbf{.096} \\
 & \sc qt & \underline{87.4}, 87.9 & .107, \textbf{.096} & 83.5, \underline{83.2} & .139, .118 \\
 & \sc ps & \underline{87.4}, 87.8 & .106, \textbf{.096} & 83.5, \underline{83.3} & .136, .103 \\ \hline
\multirow{8}{*}{\shortstack{Llama\\-3.1-8B}} & \sc mode & 69.6, 72.2 & .237, .192 & 74.9, 71.9 & .177, .142 \\
 & \sc mean & 72.7, 79.3 & .198, .155 & 78.7, \textbf{81.5} & .129, .104 \\
 & \sc [mean] & 70.1, 75.0 & .238, .186 & 75.7, 75.0 & .172, .145 \\
 & \sc medi & 69.8, 73.6 & .238, .191 & 75.2, 73.9 & .176, .142 \\
 & \sc 1p & 70.2, 76.0 & .238, .183 & 76.8, 79.2 & .172, .147 \\
 & \sc ram & 72.7, \textbf{79.9} & .200, \textbf{.152} & 78.8, \underline{81.4} & .130, \textbf{.102} \\
 & \sc qt & 72.8, 79.0 & .220, .164 & 78.7, 81.1 & .154, .116 \\
 & \sc ps & 72.8, 78.9 & .216, .161 & 78.6, \underline{81.4} & .149, .110 \\
\bottomrule
\end{tabular}
\caption{Pointwise results over methods. Comma-separated values are with and without CoT (expanded version of Table \ref{tab:pointwise-method-nocot-results}). Text styling follows Table \ref{tab:pointwise-pairwise-short-main-results}.
}
\label{tab:pointwise-method-results}
\end{table*}

\begin{table*}[t]
\small
\centering
\setlength{\tabcolsep}{4pt}
\begin{adjustbox}{center=\columnwidth}
\begin{tabular}{cccccccc}
\toprule
\multirow{2}{*}{Model} & \multirow{2}{*}{Method} & \multicolumn{2}{c}{Tie rate} & \multicolumn{2}{c}{{\sc mean}'s accuracy} & \multicolumn{2}{c}{Non-tie accuracy $\Delta$ $\uparrow$} \\\cmidrule(lr){3-4}\cmidrule(lr){5-6}\cmidrule(lr){7-8}
 & & $K=9$ & $K=99$ & $K=9$ & $K=99$ & $K=9$ & $K=99$ \\
\midrule
\multirow{4}{*}{GPT-4o} & \sc mode & .13, .17 & .09, .20 & 64, 72 & 61, 73 & +0.0, --0.1 & +0.0, --0.3 \\
 & \sc [mean] & .13, .16 & .02, .03 & 65, 67 & 61, 53 & +0.0, +0.0 & +0.0, --0.0 \\
 & \sc medi & .13, .17 & .06, .09 & 65, 70 & 58, 62 & --0.0, +0.0 & --0.0, --0.1 \\
 & \sc 1p & .13, .16 & .05, .08 & 66, 66 & 58, 60 & +0.1, +0.1 & +0.0, +0.5 \\ \hline
\multirow{4}{*}{\shortstack{Llama\\-3.1-8B}} & \sc mode & .27, .35 & .18, .24 & 60, 69 & 63, 70 & +0.2, --0.2 & +0.2, --2.0 \\
 & \sc [mean] & .25, .26 & .07, .07 & 58, 64 & 56, 61 & +0.0, +0.0 & +0.0, +0.0 \\
 & \sc medi & .26, .29 & .11, .11 & 60, 67 & 59, 67 & +0.1, --0.5 & --0.2, --0.6 \\
 & \sc 1p & .24, .23 & .08, .08 & 61, 65 & 55, 57 & +0.3, +0.6 & +0.8, +0.1 \\
\bottomrule
\end{tabular}
\end{adjustbox}
\caption{Tie analysis for discrete pointwise methods on RewardBench (expanded version of Table \ref{tab:pointwise-tie-rewardbench-small-results}). Tie rate is the proportion of instances where the method predicts a tie, over which we report {\sc mean}'s accuracy (\%); excess of 50\% or 75\% indicates room for improving accuracy or MSE, respectively. Non-tie accuracy $\Delta$ (\%) is the method's accuracy minus {\sc mean}'s accuracy over the non-tie instances. Comma-separated values are with and without CoT. We find that the mode has the most ties, the highest {\sc mean} accuracy, and the lowest non-tie accuracy delta (i.e. poor recall without better precision), especially for no-CoT $K=99$.
}
\label{tab:pointwise-tie-rewardbench-results}
\end{table*}

\begin{table*}[t]
\small
\centering
\setlength{\tabcolsep}{4pt}
\begin{adjustbox}{center=\columnwidth}
\begin{tabular}{cccccccc}
\toprule
\multirow{2}{*}{Model} & \multirow{2}{*}{Method} & \multicolumn{2}{c}{Tie rate} & \multicolumn{2}{c}{{\sc mean}'s accuracy} & \multicolumn{2}{c}{Non-tie accuracy $\Delta$ $\uparrow$} \\\cmidrule(lr){3-4}\cmidrule(lr){5-6}\cmidrule(lr){7-8}
 & & $K=9$ & $K=99$ & $K=9$ & $K=99$ & $K=9$ & $K=99$ \\
\midrule
\multirow{4}{*}{GPT-4o} & \sc mode & .13, .21 & .08, .22 & 62, 62 & 64, 67 & +0.0, --0.2 & +0.1, --0.7 \\
 & \sc [mean] & .13, .21 & .02, .04 & 61, 64 & 42, 48 & +0.0, +0.0 & +0.0, +0.0 \\
 & \sc medi & .13, .22 & .05, .11 & 61, 64 & 64, 55 & +0.0, +0.1 & +0.0, --0.6 \\
 & \sc 1p & .14, .19 & .06, .09 & 56, 62 & 67, 56 & --0.1, +0.1 & +0.3, +0.7 \\ \hline
\multirow{4}{*}{\shortstack{Llama\\-3.1-8B}} & \sc mode & .25, .45 & .14, .26 & 65, 71 & 61, 65 & --0.1, --1.0 & --0.4, --3.0 \\
 & \sc [mean] & .24, .36 & .06, .09 & 63, 68 & 49, 55 & +0.0, +0.0 & +0.0, --0.1 \\
 & \sc medi & .25, .40 & .10, .18 & 65, 69 & 54, 58 & +0.0, --0.3 & --0.4, +0.5 \\
 & \sc 1p & .20, .23 & .07, .07 & 60, 59 & 53, 50 & +0.1, --0.3 & +1.8, +0.3 \\
\bottomrule
\end{tabular}
\end{adjustbox}
\caption{Tie analysis for discrete pointwise methods on MT-Bench, mirroring Table \ref{tab:pointwise-tie-rewardbench-results}.}
\label{tab:pointwise-tie-mtbench-results}
\end{table*}

\begin{table*}[t]
\small
\centering
\begin{tabular}{cccccc}
\toprule
\multirow{2}{*}{Model} & \multirow{2}{*}{Method} & \multicolumn{2}{c}{RewardBench} & \multicolumn{2}{c}{MT-Bench}\\\cmidrule(lr){3-4}\cmidrule(lr){5-6}
&& Acc $\uparrow$ & MSE $\downarrow$ & Acc $\uparrow$ & MSE $\downarrow$ \\
\midrule
\multirow{8}{*}{GPT-4o} & \sc mode & 86.1$_\text{+1.0}$, 81.7$_\text{--2.3}$ & .118$_\text{+.002}$, .134$_\text{+.016}$ & 83.8$_\text{+1.9}$, 78.4$_\text{--2.1}$ & .152$_\text{+.000}$, .158$_\text{+.013}$ \\
 & \sc mean & \textbf{87.4}$_\text{+0.0}$, \underline{86.7}$_\text{--1.3}$ & \textbf{.097}$_\text{--.002}$, .108$_\text{+.006}$ & \underline{84.8}$_\text{+1.2}$, 82.9$_\text{--0.3}$ & .105$_\text{--.010}$, .099$_\text{+.002}$ \\
 & \sc [mean] & 87.0$_\text{+1.9}$, \underline{86.5}$_\text{+1.3}$ & .124$_\text{+.008}$, .127$_\text{+.018}$ & \textbf{85.1}$_\text{+3.1}$, 82.7$_\text{+2.5}$ & .167$_\text{+.017}$, .182$_\text{+.036}$ \\
 & \sc medi & 86.7$_\text{+1.7}$, 85.2$_\text{+0.6}$ & .119$_\text{+.003}$, .126$_\text{+.014}$ & 84.1$_\text{+2.1}$, 81.5$_\text{+1.3}$ & .160$_\text{+.010}$, .170$_\text{+.028}$ \\
 & \sc 1p & 86.6$_\text{+1.8}$, 86.4$_\text{+2.1}$ & .121$_\text{+.001}$, .116$_\text{+.000}$ & 84.2$_\text{+1.6}$, 82.7$_\text{+1.7}$ & .159$_\text{+.018}$, .165$_\text{+.027}$ \\
 & \sc ram & 87.1$_\text{--0.3}$, \underline{86.7}$_\text{--1.7}$ & .098$_\text{--.001}$, .104$_\text{+.004}$ & \textbf{85.1}$_\text{+1.2}$, 83.0$_\text{--0.4}$ & .106$_\text{--.009}$, \textbf{.098}$_\text{+.002}$ \\
 & \sc qt & \underline{87.3}$_\text{--0.1}$, \underline{86.6}$_\text{--1.3}$ & .112$_\text{+.005}$, .114$_\text{+.018}$ & \underline{84.8}$_\text{+1.3}$, 82.7$_\text{--0.5}$ & .149$_\text{+.010}$, .147$_\text{+.029}$ \\
 & \sc ps & \underline{87.3}$_\text{--0.1}$, \underline{86.6}$_\text{--1.2}$ & .105$_\text{--.001}$, .105$_\text{+.009}$ & \underline{84.8}$_\text{+1.3}$, 82.4$_\text{--0.9}$ & .130$_\text{--.006}$, .107$_\text{+.004}$ \\ \hline
\multirow{8}{*}{\shortstack{Llama\\-3.1-8B}} & \sc mode & 73.4$_\text{+3.8}$, 72.0$_\text{--0.2}$ & .222$_\text{--.015}$, .221$_\text{+.029}$ & 77.3$_\text{+2.4}$, 75.1$_\text{+3.2}$ & .191$_\text{+.014}$, .169$_\text{+.027}$ \\
 & \sc mean & 75.9$_\text{+3.2}$, \underline{79.3}$_\text{+0.0}$ & .183$_\text{--.015}$, .156$_\text{+.001}$ & 79.3$_\text{+0.6}$, \underline{81.3}$_\text{--0.2}$ & .125$_\text{--.004}$, \underline{.103}$_\text{--.001}$ \\
 & \sc [mean] & 75.3$_\text{+5.2}$, 78.5$_\text{+3.5}$ & .229$_\text{--.009}$, .198$_\text{+.012}$ & 79.3$_\text{+3.6}$, 80.7$_\text{+5.7}$ & .201$_\text{+.029}$, .180$_\text{+.035}$ \\
 & \sc medi & 74.4$_\text{+4.6}$, 76.5$_\text{+2.9}$ & .228$_\text{--.010}$, .207$_\text{+.016}$ & 78.4$_\text{+3.2}$, 80.1$_\text{+6.2}$ & .198$_\text{+.022}$, .161$_\text{+.019}$ \\
 & \sc 1p & 76.2$_\text{+6.0}$, 78.5$_\text{+2.5}$ & .218$_\text{--.020}$, .195$_\text{+.012}$ & \underline{80.6}$_\text{+3.8}$, \underline{81.5}$_\text{+2.3}$ & .187$_\text{+.015}$, .177$_\text{+.030}$ \\
 & \sc ram & 76.1$_\text{+3.4}$, \textbf{79.7}$_\text{--0.2}$ & .179$_\text{--.021}$, \textbf{.152}$_\text{+.000}$ & 79.7$_\text{+0.9}$, \underline{81.1}$_\text{--0.3}$ & .123$_\text{--.007}$, \textbf{.102}$_\text{+.000}$ \\
 & \sc qt & 75.7$_\text{+2.9}$, 78.7$_\text{--0.3}$ & .214$_\text{--.006}$, .177$_\text{+.013}$ & 78.8$_\text{+0.1}$, 81.3$_\text{+0.2}$ & .179$_\text{+.025}$, .143$_\text{+.027}$ \\
 & \sc ps & 75.7$_\text{+2.9}$, 78.6$_\text{--0.3}$ & .203$_\text{--.013}$, .163$_\text{+.002}$ & 78.6$_\text{+0.0}$, \textbf{81.8}$_\text{+0.4}$ & .151$_\text{+.002}$, .111$_\text{+.001}$ \\
\bottomrule
\end{tabular}
\caption{Pointwise results over methods ($K=99$). Comma-separated values are with and without CoT (expanded version of Table \ref{tab:pointwise-method-k99-nocot-results}). Subscripts denote change from $K=9$ (Table \ref{tab:pointwise-method-results}). Text styling follows Table \ref{tab:pointwise-pairwise-short-main-results}.
}
\label{tab:pointwise-method-k99-results}
\end{table*}

\begin{table*}[t]
\small
\centering
\begin{adjustbox}{center=\columnwidth}
\begin{tabular}{ccccccc}
\toprule
\multirow{2}{*}{Model} & \multirow{2}{*}{Center} & \multirow{2}{*}{\shortstack{Agg.\\Time}} & \multicolumn{2}{c}{RewardBench} & \multicolumn{2}{c}{MT-Bench} \\\cmidrule(lr){4-5}\cmidrule(lr){6-7}
 & & & Acc $\uparrow$ & MSE $\downarrow$ & Acc $\uparrow$ & MSE $\downarrow$ \\
\midrule
\multirow{6}{*}{GPT-4o} & \multirow{2}{*}{mode} & post & 88.1, 89.3 & .099, \textbf{.090} & \underline{86.1}, 84.9 & \textbf{.139}, \underline{.142} \\
 &  & pre & 88.4, \textbf{90.3} & .112, .094 & \textbf{86.5}, 85.2 & .154, .154 \\ \cline{2-7}
 & \multirow{2}{*}{median} & post & 88.1, 89.3 & .099, \textbf{.091} & \underline{86.1}, 84.9 & \textbf{.138}, \underline{.142} \\
 &  & pre & 88.4, \textbf{90.0} & .111, \underline{.094} & \textbf{86.6}, 85.4 & .153, .146 \\ \cline{2-7}
 & \multirow{2}{*}{mean} & post & 88.9, \textbf{90.4} & .098, \textbf{.077} & \textbf{86.5}, 85.4 & .132, .100 \\
 &  & pre & 88.9, \textbf{90.4} & .098, \underline{.078} & \textbf{86.6}, 85.4 & .132, \textbf{.097} \\ \hline
\multirow{6}{*}{\shortstack{Llama\\-3.1-8B}} & \multirow{2}{*}{mode} & post & 56.7, 52.4 & \textbf{.240}, .279 & 57.5, 53.4 & .192, \textbf{.176} \\
 &  & pre & \textbf{73.1}, 66.1 & .265, .337 & \textbf{78.1}, 70.9 & .222, .268 \\ \cline{2-7}
 & \multirow{2}{*}{median} & post & 56.8, 52.5 & \textbf{.240}, .279 & 57.5, 53.5 & .192, \textbf{.176} \\
 &  & pre & \textbf{72.9}, 65.3 & .261, .319 & \textbf{78.0}, 69.1 & .218, .238 \\ \cline{2-7}
 & \multirow{2}{*}{mean} & post & \underline{73.2}, 65.6 & \textbf{.207}, .229 & \textbf{78.2}, 70.5 & \textbf{.144}, \underline{.146} \\
 &  & pre & \textbf{73.2}, 66.3 & .222, .240 & \underline{78.1}, 70.8 & .155, .155 \\
\bottomrule
\end{tabular}
\end{adjustbox}
\caption{Pairwise ranking results over methods, using Likert-3 (expanded version of Table \ref{tab:pairwise-method-short-results}). Comma-separated values are with and without CoT. Text styling follows Table \ref{tab:pointwise-pairwise-short-main-results}.}
\label{tab:pairwise-method-results}
\end{table*}

\begin{table}[t]
\small
\centering
\setlength{\tabcolsep}{4pt}
\begin{adjustbox}{center=\columnwidth}
\begin{tabular}{cccccc}
\toprule
\multirow{2}{*}{Model} & \multirow{2}{*}{$K$} & \multicolumn{2}{c}{RewardBench} & \multicolumn{2}{c}{MT-Bench} \\\cmidrule(lr){3-4}\cmidrule(lr){5-6}
 & & Acc $\uparrow$ & MSE $\downarrow$ & Acc $\uparrow$ & MSE $\downarrow$ \\
\midrule
\multirow{3}{*}{GPT-4o} & 2 & 88.6, \textbf{90.5} & .094, \textbf{.077} & \textbf{87.3}, 85.9 & .136, .101 \\
 & 3 & 88.9, \underline{90.4} & .098, \textbf{.078} & \underline{86.6}, 85.4 & .132, .097 \\
 & 5 & 88.8, 89.5 & .099, .106 & 84.7, 85.8 & .129, \textbf{.087} \\ \hline
\multirow{3}{*}{\shortstack{Llama\\-3.1-8B}} & 2 & \textbf{74.2}, 68.6 & \textbf{.187}, .214 & \textbf{80.0}, 76.5 & \textbf{.126}, .135 \\
 & 3 & \underline{73.2}, 66.3 & .222, .240 & 78.1, 70.8 & .155, .155 \\
 & 5 & 70.0, 58.5 & .215, .234 & 77.1, 64.8 & .142, .153 \\
\bottomrule
\end{tabular}
\end{adjustbox}
\caption{Pairwise ranking results over Likert-$K$ scales, using pre-aggregation mean (expanded version of Table \ref{tab:pairwise-vspace-short-results}). Comma-separated values are with and without CoT. Text styling follows Table \ref{tab:pointwise-pairwise-short-main-results}.}
\label{tab:pairwise-vspace-results}
\end{table}

\subsection{DeepSeek-V3 Results}
\label{sec:deepseek-results}

We provide partial results for DeepSeek-V3 \cite{deepseekai2025deepseekv3technicalreport}, a model of comparable size to GPT-4o. Tables \ref{tab:deepseek-pointwise} and \ref{tab:deepseek-listwise} contain pointwise and listwise results, respectively. The trends for DeepSeek-V3 match those of GPT-4o.

\begin{table}[t]
\small
\centering
\begin{tabular}{ccc}
\toprule
Method & Acc $\uparrow$ & MSE $\downarrow$ \\
\midrule
{\sc mode} & \underline{84.7}, 82.5 & \underline{0.123}, 0.128\\
{\sc mean} & \underline{84.8}, \underline{84.2} & \underline{0.119}, \underline{0.120}\\
{\sc [mean]} & \underline{84.7}, 82.7 & \underline{0.123}, 0.127\\
{\sc medi} & \underline{84.7}, 82.6 & \underline{0.123}, 0.128\\
{\sc 1p} & 84.5, 82.9 & \underline{0.124}, 0.126\\
{\sc ram} & \textbf{84.9}, \underline{84.1} & \underline{0.120}, \textbf{0.118}\\
{\sc qt} & \textbf{85.0}, 83.9 & \underline{0.122}, 0.125\\
{\sc ps} & \textbf{85.0}, 83.9 & \underline{0.122}, 0.125\\
\bottomrule
\end{tabular}
\caption{Pointwise results with DeepSeek-V3 on RewardBench. $K=9$. Comma-separated values are with and without CoT. Text styling follows Table \ref{tab:pointwise-pairwise-short-main-results}.}
\label{tab:deepseek-pointwise}
\end{table}

\begin{table}[t]
\small
\centering
\begin{tabular}{cccccc}
\toprule
\multirow{2}{*}{Space} & \multirow{2}{*}{Method} & \multicolumn{2}{c}{Nectar} & \multicolumn{2}{c}{RM-Bench} \\\cmidrule(lr){3-4}\cmidrule(lr){5-6}
 & & Acc & MSE & Acc & MSE \\
\midrule
direct list & mode & 83.6 & 0.149 & \textbf{67.8} & 0.322 \\
direct list & mean & \textbf{84.0} & \textbf{0.129} & \underline{67.6} & \textbf{0.307} \\
\bottomrule
\end{tabular}
\caption{Listwise results with DeepSeek-V3. Text styling follows Table \ref{tab:pointwise-pairwise-short-main-results}.}
\label{tab:deepseek-listwise}
\end{table}

\section{Analysis}

\begin{table}[t]
\small
\centering
\begin{tabular}{ccccc}
\toprule
Model & Setting & $K$ & RewardBench & MT-Bench \\
\midrule
\multirow{4}{*}{GPT-4o} & point score & 9 & .000, .008 & .000, .012 \\
 & point score & 99 & .362, .409 & .357, .440 \\
 & pair rank & 3 & .000, .018 & .000, .019 \\
 & pair rank & 5 & .014, .049 & .021, .041 \\ \hline
\multirow{4}{*}{\shortstack{Llama\\-3.1-8B}} & point score & 9 & .009, .040 & .013, .025 \\
 & point score & 99 & .356, .379 & .382, .365 \\
 & pair rank & 3 & .044, .091 & .051, .081 \\
 & pair rank & 5 & .107, .194 & .107, .245\\
\bottomrule
\end{tabular}
\caption{A study on multimodality (see Appendix \ref{app:multimodality}). Comma-separated values are with and without CoT.}
\label{tab:multimodality}
\end{table}

\begin{table}[t]
\small
\centering
\begin{tabular}{ccc}
\toprule
Model & Setting & MT-Bench \\
\midrule
\multirow{3}{*}{GPT-4o} & point score & \textbf{+0.21}, \textbf{+0.24} \\
 & pair score & \textbf{+0.19}, \textbf{+0.27} \\
 & pair rank & \textbf{+0.19}, \textbf{+0.27} \\ \hline
\multirow{3}{*}{\shortstack{Llama\\-3.1-8B}} & point score & \textbf{+0.21}, \textbf{+0.14} \\
 & pair score & \textbf{+0.20}, \textbf{+0.24} \\
 & pair rank & +0.02, --0.04 \\
\bottomrule
\end{tabular}
\caption{Spearman's $\rho$ between standard deviation of human judgments and that of LLM's judgment distribution. Comma-separated values are with and without CoT. Bold denotes significant correlation ($\alpha = 0.01$). Ranking uses Likert-3; scoring uses $K=9$ converted to a Likert-3 distribution $[P(X_1 > X_2), P(X_1 = X_2), P(X_1 < X_2)]$.}
\label{tab:mtbench-agreement}
\end{table}

\begin{table*}[t]
\small
\centering
\begin{tabular}{cccccc}
\toprule
Model & Helpfulness & Correctness & Coherence & Complexity & Verbosity \\
\midrule
GPT-4o & \textbf{+0.24} & \textbf{+0.36} & \textbf{+0.32} & +0.02 & --0.01 \\
Llama-3.1-8B & \textbf{+0.14} & \textbf{+0.22} & \textbf{+0.22} & --0.00 & +0.01 \\
\bottomrule
\end{tabular}
\caption{Spearman's $\rho$ between standard deviation of human judgments and that of LLM's judgment distribution. HelpSteer2, no-CoT. Bold denotes significant correlation ($\alpha = 0.01$).}
\label{tab:hs2-pointwise-agreement}
\end{table*}

\begin{table}[t]
\small
\centering
\begin{tabular}{ccccc}
\toprule
Model & Setting & Method & $W_1$ & $W_2$ \\
\midrule
\multirow{9}{*}{GPT-4o} & \multirow{3}{*}{point score} & mode & .229, .246 & .406, .419 \\
 &  & mean & .229, .247 & .388, \textbf{.349} \\
 &  & distr & \textbf{.219}, \underline{.222} & .395, .386 \\ \cline{2-5}
 & \multirow{3}{*}{pair score} & mode & .229, .230 & .419, .419 \\
 &  & mean & \underline{.218}, \textbf{.215} & .399, \textbf{.387} \\
 &  & distr & \underline{.220}, \textbf{.215} & .408, .401 \\ \cline{2-5}
 & \multirow{3}{*}{pair rank} & mode & .228, .226 & .420, .412 \\
 &  & mean & .221, .212 & .396, \textbf{.362} \\
 &  & distr & .215, \textbf{.203} & .405, .385 \\ \hline
\multirow{9}{*}{\shortstack{Llama\\-3.1-8B}} & \multirow{3}{*}{point score} & mode & .274, .267 & .438, .405 \\
 &  & mean & .277, .267 & .412, \textbf{.359} \\
 &  & distr & .261, \textbf{.246} & .425, .391 \\ \cline{2-5}
 & \multirow{3}{*}{pair score} & mode & .268, .276 & .460, .470 \\
 &  & mean & \underline{.241}, \underline{.244} & \underline{.404}, \textbf{.400} \\
 &  & distr & \textbf{.239}, \underline{.243} & .426, .433 \\ \cline{2-5}
 & \multirow{3}{*}{pair rank} & mode & \textbf{.296}, .336 & .490, .531 \\
 &  & mean & .356, .370 & \underline{.423}, \textbf{.420} \\
 &  & distr & .347, .356 & .540, .548 \\
\bottomrule
\end{tabular}
\caption{Pluralistic alignment error ($\downarrow$, Eq. \ref{eq:Wp}) from MT-Bench human pairwise preferences. Comma-separated values are with and without CoT. Text styling follows Table \ref{tab:pointwise-pairwise-short-main-results}. The method `distr' uses the predicted distribution, while the other methods place probability 1 on a measure of central tendency.}
\label{tab:mtbench-w}
\end{table}

\begin{table*}[t]
\small
\centering
\begin{tabular}{cccccccccccc}
\toprule
\multirow{2}{*}{Model} & \multirow{2}{*}{Method} & \multicolumn{2}{c}{Helpfulness} & \multicolumn{2}{c}{Correctness} & \multicolumn{2}{c}{Coherence} & \multicolumn{2}{c}{Complexity} & \multicolumn{2}{c}{Verbosity} \\\cmidrule(lr){3-4}\cmidrule(lr){5-6}\cmidrule(lr){7-8}\cmidrule(lr){9-10}\cmidrule(lr){11-12}
& & $W_1$ & $W_2$ & $W_1$ & $W_2$ & $W_1$ & $W_2$ & $W_1$ & $W_2$ & $W_1$ & $W_2$\\
\midrule
\multirow{3}{*}{GPT-4o} & mode & .218 & .311 & .219 & .332 & .149 & .252 & .211 & .273 & .186 & .257 \\
 & mean & .221 & .297 & .217 & .318 & .151 & .240 & .213 & .262 & .197 & \textbf{.244} \\
 & distr & \textbf{.188} & \textbf{.279} & \textbf{.194} & \textbf{.301} & \textbf{.134} & \textbf{.233} & \textbf{.199} & \textbf{.255} & \textbf{.179} & .249 \\ \hline
\multirow{3}{*}{\shortstack{Llama\\-3.1-8B}} & mode & .259 & .369 & .250 & .377 & .154 & .280 & .227 & .290 & .182 & .255 \\
 & mean & .255 & .339 & .249 & .347 & .158 & .253 & .224 & .274 & .174 & \textbf{.223} \\
 & distr & \textbf{.219} & \textbf{.328} & \textbf{.215} & \textbf{.334} & \textbf{.134} & \textbf{.250} & \textbf{.209} & \textbf{.270} & \textbf{.164} & .234 \\
\bottomrule
\end{tabular}
\caption{Pluralistic alignment error ($\downarrow$, Eq. \ref{eq:Wp}) from HelpSteer2 human pointwise scores. No-CoT. Text styling follows Table \ref{tab:pointwise-pairwise-short-main-results}. The method `distr' uses the predicted distribution, while the other methods place probability 1 on a measure of central tendency.}
\label{tab:hs2-pointwise-w}
\end{table*}

\subsection{Heterogenous Preferences}\label{app:hetero-preferences}

We investigate whether LLM judges can represent pluralistically aligned preferences (i.e. reflect diverse human opinions) \cite{sorensenposition, siththaranjan2023distributional, Kumar2024ComPOCP} through their judgment distribution, without explicit training or prompting.

\subsubsection{Multimodality}\label{app:multimodality}

We begin by quantifying the degree of multimodality in the judgment distributions. An implicit assumption behind the conventional method of using the mode judgment is that the judgment distribution is unimodal and thus the mode is a representative judgment. However, in cases where humans disagree, we would like LLM judges to reflect the heterogeneity in the human population with a multimodal distribution.

We quantify multimodality as the minimum amount of probability mass that must be added to make an unnormalized unimodal distribution, divided by the total mass of the unnormalized unimodal distribution to obtain a value in $[0, 1)$, where a distribution is unimodal if the probability mass function is non-decreasing and then non-increasing. For example, if the judgment distribution is $[0.5, 0.2, 0.3]$, the minimum additional mass is 0.1 to obtain the unimodal distribution $[0.5, 0.3, 0.3]$ with total mass 1.1, so we compute the multimodality as $0.1/1.1 \approx 0.091$.

Table \ref{tab:multimodality} presents the results. We find that more granularity leads to more multimodality (note that $K=2$ always has multimodality 0), and no-CoT is more multimodal than CoT. The case of extreme multimodality for pointwise scoring $K=99$ can be largely attributed to token bias \cite{Lovering2024AreLM, Shaikh2024CBEvalAF}. For example, GPT-4o $K=99$ CoT on MT-Bench assigns on average 0.036 probability to a single token that is a multiple of 5, but only 0.002 to a single token that differs by 1 from one of those multiples of 5.

\subsubsection{Annotator Disagreement}

We next examine whether human annotator disagreement is correlated with the uncertainty in the LLM's judgment distribution. On datasets with multiple human judgments per instance, we compute Spearman's $\rho$ between the standard deviation of the human judgments and that of the LLM's judgment distribution.

For MT-Bench, we take the 961 instances with multiple human judgments. Table \ref{tab:mtbench-agreement} reports weak correlation in all settings except no correlation in pairwise ranking with Llama-3.1-8B. Remarkably, \textit{pointwise} score distributions encode sufficient information to predict if humans will disagree on a \textit{pairwise} comparison of the texts.

The HelpSteer2 dataset \cite{Wang2024HelpSteer2OD} contains multiple human ratings on a 0-4 scale for five attributes for each (prompt, response) instance.
We use a random subset of size 1,000. We prompt with the provided annotation guidelines and have the model rate all attributes in a single run. Table \ref{tab:hs2-pointwise-agreement} reports weak correlation on helpfulness, correctness, and coherence but no correlation on complexity and verbosity. We suspected this to be due to that conditioning on the earlier attributes' scores may reduce uncertainty for the later attributes \cite{Stureborg2024LargeLM, Hashemi2024LLMRubricAM}, but we found that the average standard deviation is similar across attributes for both LLM and human judgments.

\subsubsection{Pluralistic Alignment}

We finally evaluate the alignment between predicted judgment distributions and human judgment distributions.
We quantify the distance between two distributions $\mu$ and $\nu$ with the Wasserstein $p$-distance for $p \in \{1, 2\}$:
\begin{align}
    W_p(\mu, \nu) = \inf_{\gamma \in \Gamma(\mu, \nu)} \left(\mathop{\mathbb{E}}_{(x, y) \sim \gamma} |x-y|^p\right)^{\frac1p},\label{eq:Wp}
\end{align}
where $\Gamma(\mu, \nu)$ is the set of couplings of $\mu$ and $\nu$.
A higher $p$ more heavily punishes large point distances $|x - y|$.
We scale the judgment spaces to $[0, 1]$ so that $W_p(\mu, \nu) \in [0, 1]$.

As baselines, we consider deterministic distributions that place probability 1 on a measure of central tendency.

Table \ref{tab:mtbench-w} shows that using a distributional prediction has little success in improving alignment with the MT-Bench human pairwise preferences, but Table \ref{tab:hs2-pointwise-w} shows success for HelpSteer2 human pointwise scores.

We also experimented with the HelpSteer2-Preference dataset, prompting with the provided annotation guidelines \cite{Wang2024HelpSteer2PreferenceCR}. However, we found severe position bias in our experiments with GPT-4o and Llama-3.1-8B (no-CoT). The analysis showed no correlation between predicted distribution variance and annotator disagreement, and poor pluralistic alignment compared to the deterministic baselines.

\subsection{Sensitivity to Score Granularity}\label{app:sensitivity-to-score-granularity}

\begin{table}[t]
\small
\centering
\begin{tabular}{cccc}
\toprule
Model & Method & Reward-Bench & MT-Bench \\
\midrule
\multirow{9}{*}{GPT-4o} & \cellcolor{gray!15}-- & \cellcolor{gray!15}.091, .105 & \cellcolor{gray!15}.093, .111 \\
 & \sc mode & .103, .150 & .128, .214 \\
 & \sc mean & \underline{.066}, .080 & \underline{.105}, .136 \\
 & \sc [mean] & .104, .115 & .144, .199 \\
 & \sc medi & .101, .113 & .137, .185 \\
 & \sc 1p & .096, .117 & .137, .196 \\
 & \sc ram & .074, .084 & .111, .138 \\
 & \sc qt & \textbf{.064}, .078 & \textbf{.104}, .133 \\
 & \sc ps & \textbf{.064}, .078 & \textbf{.104}, .137 \\ \hline
\multirow{9}{*}{\shortstack{Llama\\-3.1-8B}} & \cellcolor{gray!15}-- & \cellcolor{gray!15}.136, .063 & \cellcolor{gray!15}.117, .076 \\
 & \sc mode & .213, .201 & .223, .247 \\
 & \sc mean & .149, .042 & .131, \underline{.048} \\
 & \sc [mean] & .213, .139 & .219, .218 \\
 & \sc medi & .219, .160 & .224, .218 \\
 & \sc 1p & .223, .105 & .183, .133 \\
 & \sc ram & .168, \underline{.037} & .156, .068 \\
 & \sc qt & .151, \textbf{.034} & .130, \underline{.048} \\
 & \sc ps & .151, .037 & .129, \textbf{.046} \\
\bottomrule
\end{tabular}
\caption{Sensitivity to granularity ($\downarrow$) of the \hl{score distributions} (Eq. \ref{eq:Wp}) and of the pointwise methods computed on them (Eq. \ref{eq:pred-flip}). Comma-separated values are with and without CoT. Text styling follows Table \ref{tab:pointwise-pairwise-short-main-results}.}
\label{tab:pointwise-method-granularity-sensitivity-results}
\end{table}

Adopting the view that LLMs latently encode a continuous distribution but output a discretization of it \cite{Gillman2024FourierHH}, we analyze how faithfully functions of the (latent) continuous distribution can be approximated by those functions computed on the (observed) discretization. For practical interest, this manifests as robustness to the choice of $K$, with convergence in distribution to the continuous distribution as $K\to\infty$. Thus, independently of the ``principledness'' of certain functions of a ground-truth continuous distribution, it is appropriate to examine the effect of discretization on our ability to approximate them to begin with. Our theoretical result is stated in Proposition \ref{prop:approx-discrete} (see Appendix \ref{app:approx} for full statement, proof, and discussion).

\begin{repproposition}{prop:approx-discrete}
    Among the discrete methods in Table \ref{tab:pointwise-methods}, {\sc mode} computed on continuous distributions may fail to be approximated by the same function computed on their discretizations, even under regularity conditions. Meanwhile, {\sc[mean]}, {\sc medi}, and {\sc 1p} admit an approximation error bound.
\end{repproposition}

We empirically assess the robustness to $K$ of the score distributions produced by the LLM judge as well as the functions computed on them. The former is not addressed by Proposition \ref{prop:approx-discrete}, which assumes the score distributions to be errorless discretizations and thus consistent across granularities.

\subsubsection{Sensitivity of Score Distributions}
For an evaluated text, let $\mu^K$ denote the score distribution with granularity $K$, with the score space scaled to $[0, 1]$. We coarsify $\mu^{99}$ into $\hat{\mu}^{99}$ by binning into 9 blocks of 11 scores. We then quantify sensitivity as the Wasserstein 1-distance $W_1(\mu^9, \hat{\mu}^{99}) \in [0, 1]$ (Eq. \ref{eq:Wp}) averaged over the pointwise instances in the dataset.

\subsubsection{Sensitivity of Pointwise Methods}
For a dataset $\mathcal{D}$ of paired responses, we denote $\mathbf{a}^K$ as the $|\mathcal{D}|$-length vector containing the value of a method computed on each pair using granularity $K$. We then quantify sensitivity as the normalized flip rate
\begin{align}
    {\rm FR} \coloneq \frac{\|{\rm sgn}(\mathbf{a}^9) - {\rm sgn}(\mathbf{a}^{99})\|_1}{\|{\rm sgn}(\mathbf{a}^9)\|_1 + \|{\rm sgn}(\mathbf{a}^{99})\|_1} \in [0, 1].\label{eq:pred-flip}
\end{align}

\subsubsection{Results}

Table \ref{tab:pointwise-method-granularity-sensitivity-results} presents the results on sensitivity to granularity.
The discrete metrics are more sensitive than the continuous metrics.
Furthermore, consistent with Proposition \ref{prop:approx-discrete}, we find that the mode is the most sensitive among the discrete methods, particularly with no-CoT.

The effect of CoT differs between the models: GPT-4o is less sensitive with CoT, and Llama-3.1-8B is less sensitive with no-CoT. Similar to \citet{Lee2024EvaluatingTC}, it would appear that although GPT-4o is a more capable judge than Llama-3.1-8B, it is not as robust to granularity (in each model's CoT/no-CoT of choice). However, this is partially because a limitation with setting $K$ as large as 99 for GPT-4o is that no-CoT distributions tend to have high spread (Table \ref{tab:std}), resulting in nontrivial probability mass falling outside of the top 20 tokens provided by the OpenAI API. Concretely, the average total mass on the top score tokens is 0.88/0.90 on RewardBench/MT-Bench for no-CoT, but over 0.99 for CoT.
\subsection{Position Bias}\label{app:position-bias}

\begin{table}[t]
\small
\centering
\begin{adjustbox}{center=\columnwidth}
\begin{tabular}{ccccc}
\toprule
Model & Setting & $K$ & MAE & MSE \\
\midrule
\multirow{5}{*}{GPT-4o} & score & 9 & .090, \textbf{.076} & .057, \textbf{.031} \\
 & score & 99 & .094, .095 & .049, \underline{.032} \\
 & rank & 2 & .086, .087 & .083, .037 \\
 & rank & 3 & .085, .089 & .078, .035 \\
 & rank & 5 & .141, .182 & .079, .053 \\ \hline
\multirow{5}{*}{\shortstack{Llama\\-3.1-8B}} & score & 9 & .199, \underline{.163} & .125, .066 \\
 & score & 99 & .188, \textbf{.160} & .114, \textbf{.060} \\
 & rank & 2 & .357, .329 & .193, .154 \\
 & rank & 3 & .683, .518 & .547, .340 \\
 & rank & 5 & .506, .342 & .334, .164 \\
\bottomrule
\end{tabular}
\end{adjustbox}
\caption{Pairwise position bias ($\downarrow$, see Appendix \ref{app:position-bias}) on RewardBench (see Table \ref{tab:pairwise-position-mtbench-results} for MT-Bench). Comma-separated values are with and without CoT. Text styling follows Table \ref{tab:pointwise-pairwise-short-main-results}. We find that no-CoT always maintains or improves MSE, even when it hurts MAE.}
\label{tab:pairwise-position-rewardbench-results}
\end{table}

\begin{table}[t]
\small
\centering
\begin{adjustbox}{center=\columnwidth}
\begin{tabular}{ccccc}
\toprule
Model & Setting & $K$ & MAE & MSE \\
\midrule
\multirow{5}{*}{GPT-4o} & score & 9 & .108, \textbf{.091} & .075, \textbf{.038} \\
 & score & 99 & .111, .108 & .066, \textbf{.039} \\
 & rank & 2 & .108, .132 & .100, .056 \\
 & rank & 3 & .108, .134 & .093, .051 \\
 & rank & 5 & .187, .172 & .120, .047 \\ \hline
\multirow{5}{*}{\shortstack{Llama\\-3.1-8B}} & score & 9 & .211, .148 & .145, .056 \\
 & score & 99 & .193, \textbf{.141} & .129, \textbf{.049} \\
 & rank & 2 & .312, .355 & .174, .172 \\
 & rank & 3 & .618, .532 & .466, .337 \\
 & rank & 5 & .458, .298 & .293, .129 \\
\bottomrule
\end{tabular}
\end{adjustbox}
\caption{Pairwise position bias ($\downarrow$) on MT-Bench, mirroring Table \ref{tab:pairwise-position-rewardbench-results}.}
\label{tab:pairwise-position-mtbench-results}
\end{table}

\begin{table}[t]
\small
\centering
\setlength{\tabcolsep}{4pt}
\begin{tabular}{cccc}
\toprule
Space & Nectar & RM-Bench & MT-Bench \\
\midrule
interm & \textbf{.086} & \textbf{.079} & \textbf{.033} \\
list & .092 & .100 & \underline{.041} \\
direct list & .118 & .105 & .056 \\
\bottomrule
\end{tabular}
\caption[]{Listwise position bias ($\downarrow$) with GPT-4o. We report the absolute value\footnotemark{} of Spearman's $\rho$ between the difference in the presented positions of two responses and the judgment. Text styling follows Table \ref{tab:pointwise-pairwise-short-main-results}.}
\label{tab:listwise-position-results}
\end{table}

We compare the degree of position bias (i.e. the LLM judge's sensitivity to the order in which the evaluated texts are presented \cite{zheng2023judging}) between various settings.

\paragraph{Evaluation Metrics}

For the pairwise setting (scoring or ranking), we measure mean absolute error (MAE) and mean squared error (MSE) between the two judgments from the two orders, using pre-aggregation mean.
Compared to MAE, MSE punishes a few large errors more than many small errors.

For the listwise setting, we measure Spearman's $\rho$ between the difference in the presented positions of two responses and the judgment.

\footnotetext{In every judgment space, GPT-4o tends to favor responses that are presented earlier.}

\paragraph{Results}

Tables \ref{tab:pairwise-position-rewardbench-results} and \ref{tab:pairwise-position-mtbench-results} report position bias in the pairwise settings. We find that no-CoT always improves MSE, even when it hurts MAE, showing that no-CoT reduces cases of extreme position bias.

Table \ref{tab:listwise-position-results} reports listwise position bias. We find that {\sc direct list} exhibits the most position bias, consistent with \citet{zhu2024starling}, despite achieving the highest accuracy (Table \ref{tab:listwise-results}). On the other hand, {\sc interm} has the least position bias. As the intermediate pairwise preferences can be likened to CoT, this suggests that intermediate reasoning can mitigate bias in challenging judgment settings. However, since an ideal judge should be able to simultaneously maximize accuracy and minimize bias, we believe current methods have ample room for improvement.

\subsection{Transitivity}\label{app:transitivity}

\begin{table}[t]
\small
\centering
\begin{tabular}{cccc}
\toprule
Model & Setting & Method & MT-Bench \\
\midrule
\multirow{4}{*}{GPT-4o} & point score & \sc qt & .000, .000 \\
 & point score & \sc ps & .006, .002 \\
 & pair rank & \sc mode-agg & .026, .022 \\
 & pair rank & \sc agg-mean & .007, .003 \\ \hline
\multirow{4}{*}{\shortstack{Llama\\-3.1-8B}} & point score & \sc qt & .000, .000 \\
 & point score & \sc ps & .001, .000 \\
 & pair rank & \sc mode-agg & .234, .218 \\
 & pair rank & \sc agg-mean & .040, .023 \\
\bottomrule
\end{tabular}
\caption{A study on transitivity. In each cell, we report the proportion of triplets that exhibit intransitivity, with and without CoT. (Pointwise scoring uses $K=9$; pairwise ranking uses Likert-2.) In addition, our Nectar silver labels (GPT-4o, Likert-5, no-CoT, mean) have an intransitivity rate of 0.020.}
\label{tab:intrans}
\end{table}

We say a comparison method $a(\cdot, \cdot) \in [-1, 1]$ is transitive if $a(A_1, A_2) > 0$ and $a(A_2, A_3) \ge 0$ imply $a(A_1, A_3) > 0$ for all triplets of texts $(A_1, A_2, A_3)$. For example, a score distribution comparison function that reduces to the comparison of two real numbers derived from the two score distributions independently (e.g. mode or mean) is transitive. On the other hand, {\sc qt}, {\sc ps}, and the pairwise ranking methods are intransitive.

Human preferences have been shown to exhibit intransitivity \cite{klimenko2015intransitivity}, motivating the question of whether LLM judges do so too and how this depends on the method used. Several prior works have proposed methods incorporating awareness of the intransitivity in LLM or human preferences \cite{liu2024aligning, ethayarajh2024kto, Zhang2024GeneralPM, Ye2024OnlineIR, Hu2024LanguageMP, Zhang2024ContraSolverSO, Liu2024AligningWL}. We adopt the view in \citet{liu2024aligning} that transitivity is generally desirable and indicative of a more capable judge, especially in the absence of a curated dataset of intransitive human preferences. Nevertheless, we remark that the ability to model intransitivity is essential to preference modeling in its full generality \cite{ethayarajh2024kto, Zhang2024GeneralPM, Ye2024OnlineIR}, which, among pointwise methods, is achieved by {\sc qt} and {\sc ps} but not by mode and mean used in prior work.

Table \ref{tab:intrans} presents the intransitivity rates of different methods. Despite the capacity of {\sc qt} and {\sc ps} to model intransitive preferences \cite{Savage1994ThePO, Finkelstein2006NontransitiveDW, Conrey2013IntransitiveD}, we find that they exhibit negligible intransitivity compared to the pairwise ranking methods. Similar to \citet{liu2024aligning}, we observe that a stronger judge (GPT-4o) exhibits less intransitivity than a weaker judge (Llama-3.1-8B). Pre-aggregation mean exhibits less intransitivity than post-aggregation mode. Notably, for pairwise ranking, we observe more intransitivity with CoT than without CoT, even though CoT achieves higher accuracy (Table \ref{tab:pointwise-pairwise-short-main-results}).

\section{Derivations}\label{app:derive}

\subsection{Approximability of Discrete Pointwise Functions Under Discretization}\label{app:approx}

\begin{proposition}\label{prop:approx-discrete}
    We analyze the discrete methods in Table \ref{tab:pointwise-methods}. Specifically, we examine the score function $r$ rather than ${\rm sgn}(r_1-r_2)$.

    Let $X$ be a random variable with support $S \subset [\frac12, K + \frac12)$ for an integer $K$. Define its discretization $\hat{X}$ by $P(\hat{X} = \hat{x}) \coloneq P([X] = \hat{x})$ for $\hat{x} \in \hat{S} \coloneq \{1, \ldots, K\}$, where $[\cdot]$ denotes rounding to the nearest integer.
    \begin{enumerate}
        \item {\sc mode} may fail to be approximated: Suppose $X$ has a density $f_X$ that is $L$-Lipschitz with $L \le 1$ and achieves its supremum at $x^* \in \arg\max_{x \in S} f_X(x)$. Let $\hat{x}^* \in \arg\max_{\hat{x} \in \hat{S}} P(\hat{X} = \hat{x})$.
        Suppose some $\hat{x} \in \hat{S}$, with arbitrarily large $|\hat{x} - \hat{x}^*| > 1$, satisfies $P(\hat{X} = \hat{x}^*) \ge P(\hat{X} = \hat{x}) + \frac{L}{4}$. The above is consistent with $[x^*] = \hat{x}$.



        \item {\sc[mean]} can be approximated: $|[\mathbb{E}X] - [\mathbb{E}\hat{X}]| \le 1$.

        \item {\sc medi} and {\sc 1p} can be approximated: For $p \in (0, 1)$, $|Q_X(p) - Q_{\hat{X}}(p)| \le \frac12$.

    \end{enumerate}
\end{proposition}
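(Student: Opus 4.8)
The plan is to treat the three parts separately, since Parts 2 and 3 are short consequences of the pointwise rounding bound $|[X]-X|\le\frac12$, whereas Part 1 is a non-approximability claim that I would establish by exhibiting an explicit witness density. Before anything else I would record the two structural facts I reuse throughout: the definition $P(\hat X=\hat x)=P([X]=\hat x)$ says exactly that $\hat X\overset{d}{=}[X]$, and rounding moves any real number by at most $\frac12$, so $X-\frac12\le[X]\le X+\frac12$ holds pointwise.

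For {\sc[mean]} (Part 2) I would use $\mathbb E\hat X=\mathbb E[X]$, so that $|\mathbb EX-\mathbb E\hat X|=|\mathbb E(X-[X])|\le\mathbb E|X-[X]|\le\frac12$ by the triangle inequality. Since $|[a]-[b]|\le|a-b|+1$ with integer left-hand side, a gap of at most $\frac12$ between the two means forces $|[\mathbb EX]-[\mathbb E\hat X]|\le1$, which is the claim. For {\sc medi} and {\sc 1p} (Part 3) I would pass through the CDF. The pointwise sandwich $X-\frac12\le[X]\le X+\frac12$ yields, upon taking the corresponding events, the distributional sandwich $F_X(t-\tfrac12)\le F_{\hat X}(t)\le F_X(t+\tfrac12)$ for every $t$. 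Feeding this into $Q_{\hat X}(p)=\inf\{t:F_{\hat X}(t)\ge p\}$ and using right-continuity of $F_X$ gives the two-sided bound $Q_X(p)-\frac12\le Q_{\hat X}(p)\le Q_X(p)+\frac12$, and specializing $p=\tfrac12$ and $p=0.01$ recovers {\sc medi} and {\sc 1p}.

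The main obstacle is Part 1, the non-approximability of {\sc mode}, where I would construct a density rather than prove a bound. The tension to exploit is that the continuous mode tracks the \emph{pointwise peak} of $f_X$ while the discrete mode tracks \emph{integrated bin mass}, and an $L$-Lipschitz peak necessarily loses mass relative to a flat plateau of equal height. Concretely I would place a triangular peak $f_X(x)=H-L|x-\hat x|$ centered at the integer $\hat x$, so its bin carries mass $\int_{-1/2}^{1/2}(H-L|t|)\,\d{t}=H-\frac{L}{4}$, together with a far-away plateau at constant height $H$ covering a neighborhood of bin $\hat x^*$, whose bin carries the full mass $H$. Both regions attain the global supremum $H$, so I may take $x^*=\hat x\in\arg\max f_X$ with $[x^*]=\hat x$, yet the plateau bin beats the peak bin by exactly $\frac{L}{4}$; this realizes the stated threshold $P(\hat X=\hat x^*)\ge P(\hat X=\hat x)+\frac{L}{4}$ and makes $\hat x^*$ the discrete mode. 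Because $K$ may be taken large, $\hat x^*$ can be placed arbitrarily far from $\hat x$, so the discretized mode is not controlled by the continuous one.

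The residual work in Part 1 is bookkeeping: joining the peak and plateau by $L$-Lipschitz ramps, adding a small nonnegative baseline so that $f_X$ integrates to $1$ on $[\frac12,K+\frac12)$, and checking that $f_X$ remains globally $L$-Lipschitz with $L\le1$ and retains $H$ as its supremum. These are routine once $K$ is large enough to accommodate both features with room to spare, so I expect no difficulty beyond the mass computation above, which already supplies the decisive $\frac{L}{4}$ gap.
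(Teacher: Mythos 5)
Your proposal is correct and follows essentially the same route as the paper's proof: the same triangular-peak-versus-flat-plateau construction (with the identical $\frac{L}{4}$ bin-mass gap) for the {\sc mode} counterexample, the same $\frac12$-rounding bound on $|\mathbb{E}X-\mathbb{E}\hat{X}|$ for {\sc[mean]}, and the same CDF-shift comparison for the quantiles. The only difference is that the paper pins down the peak height explicitly (via $d=\frac{L}{4}(\sqrt{1+8/L}-2)$) to certify that the total mass stays below $1$ before padding with a uniform density, whereas you defer this normalization as bookkeeping.
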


\begin{proof}\hfill
    \begin{enumerate}
        \item\label{prop:approx-mode} We present a construction.

        If $L = 0$, the claim is immediate; assume not. Define $d \coloneq \frac{L}{4}(\sqrt{1+8/L}-2) \ge \frac{L}{4}$. Let $f_X(x) = (d - \frac{L}{4}) + L(x - \hat{x} + \frac12)$ for $x \in [\hat{x} - \frac12, \hat{x})$, and $f_X(x) = (d - \frac{L}{4}) + L(\hat{x} - x + \frac12)$ for $x \in [\hat{x}, \hat{x} + \frac12)$, and $f_X(x) = d + \frac{L}{4}$ for $[x] = \hat{x}^*$.
        
        Around the regions $[\hat{x}-\frac12, \hat{x}+\frac12), [\hat{x}^*-\frac12, \hat{x}^*+\frac12)$, we let $f_X$ decrease to 0 with slope $\pm L$, or until reaching the domain boundary or each other. Continuity is maintained at the junction because, supposing $\hat{x} < \hat{x}^*$ without loss of generality, the nearest endpoints $\hat{x} + \frac12, \hat{x}^* - \frac12$ satisfy $|(\hat{x} + \frac12) - (\hat{x}^* - \frac12)| \ge 1$ and $|f_X(\hat{x} + \frac12) - f_X(\hat{x}^* - \frac12)| = \frac{L}{2}$.

        We verify that $P(\hat{X} = \hat{x}^*) = d + \frac{L}{4} = P(\hat{X} = \hat{x}) + \frac{L}{4}$ and $\hat{x} \in \{\hat{x}\} \cup [\hat{x}^* - \frac12, \hat{x}^* + \frac12) = \arg\max_{x \in S} f_X(x)$.

        It remains to check that we have a valid distribution. The total $\int f_X$ is bounded by the case if $f_X$ is allowed to reach 0 everywhere possible above:
        \begin{align*}
            \int f_X &\le P(\hat{X} = \hat{x}) + P(\hat{X} = \hat{x}^*)\\
            &\quad+ \frac{1}{L}\left(d-\frac{L}{4}\right)^2 + \frac{1}{L}\left(d+\frac{L}{4}\right)^2\\
            &= 1 - \frac{L}{4} < 1,
        \end{align*}
        so $f_X$ can be made a valid density by adding an appropriately scaled uniform density, not affecting the desired properties.



        \item\label{prop:approx-mean} Denote the measures of $X, \hat{X}$ as $\mu_X, \mu_{\hat{X}}$. The definition of $X$ and $\hat{X}$ is equivalent to the existence of a coupling $\gamma \in \Gamma(\mu_X, \mu_{\hat{X}})$ with samples defined by $(x, \hat{x}) \sim \gamma$ for $x \sim \mu_X$ and $\hat{x} = [x]$.
        \begin{gather*}
            |\mathbb{E}X - \mathbb{E}\hat{X}| = \left|\int (x - \hat{x}) \d \gamma(x, \hat{x})\right|\\
            \le \int |x - \hat{x}| \d \gamma(x, \hat{x}) \le \int \frac12 \d \gamma(x, \hat{x}) = \frac12
        \end{gather*}
        Thus, $|[\mathbb{E}X] - [\mathbb{E}\hat{X}]| \le 1$.

        \item\label{prop:approx-quantile} Let $q \coloneq Q_X(p)$.
        \begin{gather*}
            P(\hat{X} < [q]-\frac12) = P(X < [q]-\frac12) < p\\
            \le P(X < [q]+\frac12) = P(\hat{X} < [q]+\frac12),
        \end{gather*}
        implying $Q_{\hat{X}}(p) = [q]$ where $|q - [q]| \le \frac12$.

    \end{enumerate}
\end{proof}
\begin{remark}
    The suppositions in (\ref{prop:approx-mode}) are to impose regularity and show even then approximation may not hold. For an example of their omission, without requiring absolutely continuous $X$, it could place atoms at arbitrary $x$, preventing any margin $P(\hat{X} = \hat{x}^*) - P(\hat{X} = \hat{x})$ less than 1 from producing an error bound. The crucial case that causes the mode to be unstable to approximate is the case of multimodality.


    In (\ref{prop:approx-quantile}), it is crucial that we assumed no discretization error, i.e. $|P(\hat{X} = \hat{x}) - P([X] = \hat{x})| = 0$. With any discretization error, we would have no bound on approximation error.

\end{remark}

\section{Licensing}

Our usage of the artifacts below complies with their licenses.

\paragraph{Model Licensing}
GPT-4o\footnote{\url{https://platform.openai.com/docs/models\#gpt-4o}} has a proprietary license. Llama-3.1-8B\footnote{\url{https://huggingface.co/meta-llama/Llama-3.1-8B-Instruct}} is licensed under the Llama 3.1 Community License Agreement. Mistral-7B\footnote{\url{https://huggingface.co/mistralai/Mistral-7B-Instruct-v0.3}} and Prometheus-2-7B\footnote{\url{https://huggingface.co/prometheus-eval/prometheus-7b-v2.0}} are licensed under the Apache License 2.0.

\paragraph{Dataset Licensing}
The datasets contain English language data.
RewardBench\footnote{\url{https://huggingface.co/datasets/allenai/reward-bench}} and RM-Bench\footnote{\url{https://huggingface.co/datasets/THU-KEG/RM-Bench}} are licensed under the ODC-By license. MT-Bench\footnote{\url{https://huggingface.co/datasets/lmsys/mt_bench_human_judgments}} and HelpSteer2\footnote{\url{https://huggingface.co/datasets/nvidia/HelpSteer2/tree/main/disagreements}} are licensed under the CC BY 4.0 license. Nectar\footnote{\url{https://huggingface.co/datasets/berkeley-nest/Nectar}} is licensed under the Apache License 2.0.

\section{Ethical Considerations}

LLMs can exhibit unwanted biases. Relying on their judgments for downstream applications can propagate these biases. Nevertheless, our findings in this paper promote practices for improving alignment with human preferences.

\end{document}